
\documentclass[10pt,journal,compsoc]{IEEEtran}
%
% If IEEEtran.cls has not been installed into the LaTeX system files,
% manually specify the path to it like:
% \documentclass[10pt,journal,compsoc]{../sty/IEEEtran}

% Some very useful LaTeX packages include:
% (uncomment the ones you want to load)

% *** MISC UTILITY PACKAGES ***
%
%\usepackage{ifpdf}
% Heiko Oberdiek's ifpdf.sty is very useful if you need conditional
% compilation based on whether the output is pdf or dvi.
% usage:
% \ifpdf
%   % pdf code
% \else
%   % dvi code
% \fi
% The latest version of ifpdf.sty can be obtained from:
% http://www.ctan.org/pkg/ifpdf
% Also, note that IEEEtran.cls V1.7 and later provides a builtin
% \ifCLASSINFOpdf conditional that works the same way.
% When switching from latex to pdflatex and vice-versa, the compiler may
% have to be run twice to clear warning/error messages.

% *** CITATION PACKAGES ***
%
\ifCLASSOPTIONcompsoc
  % IEEE Computer Society needs nocompress option
  % requires cite.sty v4.0 or later (November 2003)
  \usepackage[nocompress]{cite}
\else
  % normal IEEE
  \usepackage{cite}
\fi
% cite.sty was written by Donald Arseneau
% V1.6 and later of IEEEtran pre-defines the format of the cite.sty package
% \cite{} output to follow that of the IEEE. Loading the cite package will
% result in citation numbers being automatically sorted and properly
% "compressed/ranged". e.g., [1], [9], [2], [7], [5], [6] without using
% cite.sty will become [1], [2], [5]--[7], [9] using cite.sty. cite.sty's
% \cite will automatically add leading space, if needed. Use cite.sty's
% noadjust option (cite.sty V3.8 and later) if you want to turn this off
% such as if a citation ever needs to be enclosed in parenthesis.
% cite.sty is already installed on most LaTeX systems. Be sure and use
% version 5.0 (2009-03-20) and later if using hyperref.sty.
% The latest version can be obtained at:
% http://www.ctan.org/pkg/cite
% The documentation is contained in the cite.sty file itself.
%
% Note that some packages require special options to format as the Computer
% Society requires. In particular, Computer Society  papers do not use
% compressed citation ranges as is done in typical IEEE papers
% (e.g., [1]-[4]). Instead, they list every citation separately in order
% (e.g., [1], [2], [3], [4]). To get the latter we need to load the cite
% package with the nocompress option which is supported by cite.sty v4.0
% and later. Note also the use of a CLASSOPTION conditional provided by
% IEEEtran.cls V1.7 and later.

% *** GRAPHICS RELATED PACKAGES ***
%
\ifCLASSINFOpdf
  \usepackage[pdftex]{graphicx}
  % declare the path(s) where your graphic files are
  % \graphicspath{{../pdf/}{../jpeg/}}
  % and their extensions so you won't have to specify these with
  % every instance of \includegraphics
  % \DeclareGraphicsExtensions{.pdf,.jpeg,.png}
\else
  % or other class option (dvipsone, dvipdf, if not using dvips). graphicx
  % will default to the driver specified in the system graphics.cfg if no
  % driver is specified.
  % \usepackage[dvips]{graphicx}
  % declare the path(s) where your graphic files are
  % \graphicspath{{../eps/}}
  % and their extensions so you won't have to specify these with
  % every instance of \includegraphics
  % \DeclareGraphicsExtensions{.eps}
\fi
% graphicx was written by David Carlisle and Sebastian Rahtz. It is
% required if you want graphics, photos, etc. graphicx.sty is already
% installed on most LaTeX systems. The latest version and documentation
% can be obtained at:
% http://www.ctan.org/pkg/graphicx
% Another good source of documentation is "Using Imported Graphics in
% LaTeX2e" by Keith Reckdahl which can be found at:
% http://www.ctan.org/pkg/epslatex
%
% latex, and pdflatex in dvi mode, support graphics in encapsulated
% postscript (.eps) format. pdflatex in pdf mode supports graphics
% in .pdf, .jpeg, .png and .mps (metapost) formats. Users should ensure
% that all non-photo figures use a vector format (.eps, .pdf, .mps) and
% not a bitmapped formats (.jpeg, .png). The IEEE frowns on bitmapped formats
% which can result in "jaggedy"/blurry rendering of lines and letters as
% well as large increases in file sizes.
%
% You can find documentation about the pdfTeX application at:
% http://www.tug.org/applications/pdftex

% *** MATH PACKAGES ***
%
\usepackage{amsmath}
% A popular package from the American Mathematical Society that provides
% many useful and powerful commands for dealing with mathematics.
%
% Note that the amsmath package sets \interdisplaylinepenalty to 10000
% thus preventing page breaks from occurring within multiline equations. Use:
%\interdisplaylinepenalty=2500
% after loading amsmath to restore such page breaks as IEEEtran.cls normally
% does. amsmath.sty is already installed on most LaTeX systems. The latest
% version and documentation can be obtained at:
% http://www.ctan.org/pkg/amsmath

\DeclareMathOperator{\st}{s.t. }

% *** SPECIALIZED LIST PACKAGES ***
%\usepackage[utf8]{inputenc}
%\usepackage[T1]{fontenc}
%\usepackage{lmodern}
\usepackage{algorithm}
\usepackage{algpseudocode}

% define new keywords
\algnewcommand\algorithmicinput{\textbf{Input:}}
\algnewcommand\Input{\item[\algorithmicinput]}
\algnewcommand\algorithmicoutput{\textbf{Output:}}
\algnewcommand\Output{\item[\algorithmicoutput]}
\algnewcommand{\LineComment}[1]{\State \(\triangleright\) #1}

\usepackage{booktabs}

\makeatletter
\newcommand\fs@booktabsruled{%
    \def\@fs@cfont{\bfseries\strut}\let\@fs@capt\floatc@ruled
    \def\@fs@pre{\hrule height\heavyrulewidth depth0pt \kern\belowrulesep}%
    \def\@fs@mid{\kern\aboverulesep\hrule height\lightrulewidth\kern\belowrulesep}%
    \def\@fs@post{\kern\aboverulesep\hrule height\heavyrulewidth\relax}%
    \let\@fs@iftopcapt\iftrue
}
\makeatother
\floatstyle{booktabsruled}
\restylefloat{algorithm}

% THEOREM
\usepackage{amsthm}
\newtheorem{theorem}{Theorem}
\newtheorem{assumption}{Assumption}
\newtheorem{lemma}{Lemma}

%% Enable hyperlink
\usepackage{hyperref}
\usepackage[nameinlink,noabbrev]{cleveref}
%% Multirow
\usepackage{array, multirow, multicol, threeparttable}

\usepackage{times}
\usepackage{textcomp}
%%-------------  cross referencing
\usepackage{xr}
\usepackage{zref-base}
\usepackage{atveryend}
\makeatletter
\AfterLastShipout{%
    \if@filesw
    \begingroup
    \zref@setcurrent{default}{\the\value{equation}}%
    \zref@wrapper@immediate{%
        \zref@labelbyprops{eq-last}{default}%
    }%
    \endgroup
    \fi
}
\makeatother

\DeclareMathOperator*{\argmin}{arg\,min}
\usepackage{amstext}
\usepackage{mathtools}
\DeclareMathOperator{\tr}{tr}

\DeclareMathOperator{\var}{var}
\newcommand\norm[1]{\left\lVert#1\right\rVert}

\hyphenation{op-tical net-works semi-conduc-tor}

\begin{document}
%
% paper title
% Titles are generally capitalized except for words such as a, an, and, as,
% at, but, by, for, in, nor, of, on, or, the, to and up, which are usually
% not capitalized unless they are the first or last word of the title.
% Linebreaks \\ can be used within to get better formatting as desired.
% Do not put math or special symbols in the title.
\title{Distributed Bayesian Matrix Decomposition for Big Data Mining and Clustering}
%
%
% author names and IEEE memberships
% note positions of commas and nonbreaking spaces ( ~ ) LaTeX will not break
% a structure at a ~ so this keeps an author's name from being broken across
% two lines.
% use \thanks{} to gain access to the first footnote area
% a separate \thanks must be used for each paragraph as LaTeX2e's \thanks
% was not built to handle multiple paragraphs
%
%
%\IEEEcompsocitemizethanks is a special \thanks that produces the bulleted
% lists the Computer Society journals use for "first footnote" author
% affiliations. Use \IEEEcompsocthanksitem which works much like \item
% for each affiliation group. When not in compsoc mode,
% \IEEEcompsocitemizethanks becomes like \thanks and
% \IEEEcompsocthanksitem becomes a line break with idention. This
% facilitates dual compilation, although admittedly the differences in the
% desired content of \author between the different types of papers makes a
% one-size-fits-all approach a daunting prospect. For instance, compsoc
% journal papers have the author affiliations above the "Manuscript
% received ..."  text while in non-compsoc journals this is reversed. Sigh.

\author{Chihao~Zhang,~Yang~Yang, ~Wei~Zhou
    and~Shihua~Zhang*% <-this % stops a space
    \IEEEcompsocitemizethanks{\IEEEcompsocthanksitem Chihua Zhang and Shihua Zhang are with the NCMIS, CEMS, RCSDS, Academy of Mathematics and Systems Science, Chinese Academy of Sciences, Beijing 100190, School of Mathematical Sciences, University of Chinese Academy of Sciences, Beijing 100049, and Center for Excellence in Animal Evolution and Genetics, Chinese Academy of Sciences, Kunming 650223, China.
    \IEEEcompsocthanksitem Yang Yang and Wei Zhou are with the School of Software, Yunnan University, Kunming 650504, China.
    \protect\\ *To whom correspondence should be addressed. Email: zsh@amss.ac.cn.}}

% note the % following the last \IEEEmembership and also \thanks -
% these prevent an unwanted space from occurring between the last author name
% and the end of the author line. i.e., if you had this:
%
% \author{....lastname \thanks{...} \thanks{...} }
%                     ^------------^------------^----Do not want these spaces!
%
% a space would be appended to the last name and could cause every name on that
% line to be shifted left slightly. This is one of those "LaTeX things". For
% instance, "\textbf{A} \textbf{B}" will typeset as "A B" not "AB". To get
% "AB" then you have to do: "\textbf{A}\textbf{B}"
% \thanks is no different in this regard, so shield the last } of each \thanks
% that ends a line with a % and do not let a space in before the next \thanks.
% Spaces after \IEEEmembership other than the last one are OK (and needed) as
% you are supposed to have spaces between the names. For what it is worth,
% this is a minor point as most people would not even notice if the said evil
% space somehow managed to creep in.

% The paper headers
\markboth{ZHANG C, Yang Y., Zhou W., ZHANG S.: DISTRIBUTED BAYESIAN MATRIX DECOMPOSITION FOR BIG DATA MINING AND CLUSTERING}%
{IEEE TRANSACTIONS ON JOURNAL NAME,  MANUSCRIPT ID}
% The only time the second header will appear is for the odd numbered pages
% after the title page when using the twoside option.
%
% *** Note that you probably will NOT want to include the author's ***
% *** name in the headers of peer review papers.                   ***
% You can use \ifCLASSOPTIONpeerreview for conditional compilation here if
% you desire.

% The publisher's ID mark at the bottom of the page is less important with
% Computer Society journal papers as those publications place the marks
% outside of the main text columns and, therefore, unlike regular IEEE
% journals, the available text space is not reduced by their presence.
% If you want to put a publisher's ID mark on the page you can do it like
% this:
%\IEEEpubid{0000--0000/00\$00.00~\copyright~2015 IEEE}
% or like this to get the Computer Society new two part style.
%\IEEEpubid{\makebox[\columnwidth]{\hfill 0000--0000/00/\$00.00~\copyright~2015 IEEE}%
%\hspace{\columnsep}\makebox[\columnwidth]{Published by the IEEE Computer Society\hfill}}
% Remember, if you use this you must call \IEEEpubidadjcol in the second
% column for its text to clear the IEEEpubid mark (Computer Society jorunal
% papers don't need this extra clearance.)

% use for special paper notices
%\IEEEspecialpapernotice{(Invited Paper)}

% for Computer Society papers, we must declare the abstract and index terms
% PRIOR to the title within the \IEEEtitleabstractindextext IEEEtran
% command as these need to go into the title area created by \maketitle.
% As a general rule, do not put math, special symbols or citations
% in the abstract or keywords.
\IEEEtitleabstractindextext{%
\begin{abstract}
Matrix decomposition is one of the fundamental tools to discover knowledge from big data generated by modern applications. However, it is still inefficient or infeasible to process very big data using such a method in a single machine. Moreover, big data are often distributedly collected and stored on different machines. Thus, such data generally bear strong heterogeneous noise. It is essential and useful to develop distributed matrix decomposition for big data analytics. Such a method should scale up well, model the heterogeneous noise, and address the communication issue in a distributed system. To this end, we propose a distributed Bayesian matrix decomposition model (DBMD) for big data mining and clustering. Specifically, we adopt three strategies to implement the distributed computing including 1) the accelerated gradient descent, 2) the alternating direction method of multipliers (ADMM), and 3) the statistical inference. We investigate the theoretical convergence behaviors of these algorithms. To address the heterogeneity of the noise, we propose an optimal plug-in weighted average that reduces the variance of the estimation. Synthetic experiments validate our theoretical results, and real-world experiments show that our algorithms scale up well to big data and achieves superior or competing performance compared to other distributed methods.
\end{abstract}

% Note that keywords are not normally used for peerreview papers.
\begin{IEEEkeywords}
Distributed algorithm, Bayesian matrix decomposition, clustering, big data, data mining
\end{IEEEkeywords}}

% make the title area
\maketitle

% To allow for easy dual compilation without having to reenter the
% abstract/keywords data, the \IEEEtitleabstractindextext text will
% not be used in maketitle, but will appear (i.e., to be "transported")
% here as \IEEEdisplaynontitleabstractindextext when the compsoc
% or transmag modes are not selected <OR> if conference mode is selected
% - because all conference papers position the abstract like regular
% papers do.
\IEEEdisplaynontitleabstractindextext
% \IEEEdisplaynontitleabstractindextext has no effect when using
% compsoc or transmag under a non-conference mode.

% For peer review papers, you can put extra information on the cover
% page as needed:
% \ifCLASSOPTIONpeerreview
% \begin{center} \bfseries EDICS Category: 3-BBND \end{center}
% \fi
%
% For peerreview papers, this IEEEtran command inserts a page break and
% creates the second title. It will be ignored for other modes.
\IEEEpeerreviewmaketitle

\IEEEraisesectionheading{\section{Introduction}\label{sec:introduction}}
% Computer Society journal (but not conference!) papers do something unusual
% with the very first section heading (almost always called "Introduction").
% They place it ABOVE the main text! IEEEtran.cls does not automatically do
% this for you, but you can achieve this effect with the provided
% \IEEEraisesectionheading{} command. Note the need to keep any \label that
% is to refer to the section immediately after \section in the above as
% \IEEEraisesectionheading puts \section within a raised box.

% The very first letter is a 2 line initial drop letter followed
% by the rest of the first word in caps (small caps for compsoc).
%
% form to use if the first word consists of a single letter:
% \IEEEPARstart{A}{demo} file is ....
%
% form to use if you need the single drop letter followed by
% normal text (unknown if ever used by the IEEE):
% \IEEEPARstart{A}{}demo file is ....
%
% Some journals put the first two words in caps:
% \IEEEPARstart{T}{his demo} file is ....
%
% Here we have the typical use of a "T" for an initial drop letter
% and "HIS" in caps to complete the first word.

\IEEEPARstart{B}{ig} data emerge from various disciplines of sciences with the development of technologies. For example, different types of satellite platforms have generated vast amounts of remote sensing data, and high-throughput sequencing technologies provide large-scale transcriptomic data. Such data are usually organized in the matrix form and are generally redundant and noisy. Therefore, matrix decomposition becomes one of the fundamental tools to explore the data. The history of matrix decomposition dates back to more than one century ago when Pearson invented principal component analysis (PCA) \cite{Pearson1901}. Since then, matrix decomposition has been extensively studied due to its effectiveness and is still an active topic today. The conceptual idea of matrix decomposition is that the primitive big and noisy data matrix can be approximated by the product of two or more compact low-rank matrices. Mathematically, given the observed data matrix $\boldsymbol{X} \in R^{m\times n}$, matrix decomposition methods consider
\begin{equation}\label{eq:md}
\min_{\boldsymbol{W},\boldsymbol{H}}\text{D}(\boldsymbol{X}||\boldsymbol{W}\boldsymbol{H}),
\end{equation}
where $\boldsymbol{W}\in R^{m\times r}$, $\boldsymbol{H} \in R^{r \times n}$, and $\text{D}$ is a divergence function.
Typically,  $r \ll \min(m, n)$, and thus $\boldsymbol{W}$ and $\boldsymbol{H}$ are the compact basis and coefficient matrices, respectively.

\par Matrix decomposition methods are flexible by imposing different restrictions or regularizers on $\boldsymbol{W}$, $\boldsymbol{H}$, and choosing different divergence functions. From the view of minimization of the reconstruction error \cite{Bishop2006}, PCA is merely the matrix decomposition method with the Frobenius norm as the divergence function $\norm{\boldsymbol{X} - \boldsymbol{W}\boldsymbol{H}}_F^2$.
It is known that PCA is sensitive to gross errors.
Shen \textit{et al}.\cite{Shen2014} developed a robust PCA by replacing the Frobenius norm with $L_1$-norm.
Min \textit{et al}. \cite{Min2019} imposed group-sparse penalties on SVD to account for prior group effects.
The famous clustering algorithm k-means can also be understood as a matrix decomposition method.
If we choose the Frobenius norm as the divergence function and restrict $\boldsymbol{H}$ such that its each column indicates the cluster membership by a single one, then Eq. (\ref{eq:md}) is exactly the k-means method.
Moreover, it is common that the primitive data are naturally nonnegative, such as images and text corpus.
Hence, the nonnegative matrix factorization (NMF) has been explored \cite{Paatero1994,Lee1999}.
NMF restricts both $\boldsymbol{W}$ and $\boldsymbol{H}$ to be nonnegative. The negativeness enhances the interpretability of the model and leads to part-based feature extraction and sparseness \cite{Lee1999}.
Therefore, NMF gets popular and has many variants developed, including sparse NMF \cite{Hoyer2002,Kim2007} and graph regularized NMF \cite{Cai2008, Cai2011, Zhang2011}.

\par However, the standard matrix decomposition methods are prone to overfitting the observed matrix that is noisy and has missing values. The imposed regularizers can reduce the risk of overfitting, but their parameters require carefully tuning. Bayesian matrix decomposition addresses this problem by incorporating priors into the model parameters.
Tipping and Bishop \cite{Tipping1999} first proposed the probabilistic PCA (PPCA) and showed that PPCA is a latent variable model with independent and identical distribution (IID) Gaussian noise. The probabilistic treatment permits various extensions of PCA \cite{bishop1999bayesian, Collins2001, mohamed2009bayesian, Li2013}.
Analogous to Bayesian PCA, the Bayesian treatments for k-means \cite{Welling2006}, NMF \cite{Schmidt2009, Cemgil2009} have also been explored. In collaborative filtering, Salakhutdinov and Mnih \cite{Salakhutdinov2008} proposed the Bayesian probabilistic matrix factorization (BPMF). One of the appealing characteristics of the Bayesian approach is that it gives the flexibility to design different matrix decomposition methods by choosing appropriate distributions for priors and noise. Saddiki \textit{et al}.\cite{Saddiki2015} proposed a mixed-membership model named GLAD that employs priors of Laplace and Dirichlet distribution on $\boldsymbol{W}$ and columns of $\boldsymbol{H}$, respectively.
Multi-view data collected from different sources are now ubiquitous, and bear distinctly heterogeneous noise \cite{Xu2013}. Such data from different sources are complementary, and thus computational methods for integrative analysis are urgently needed. Some matrix decomposition methods for multi-view data integration have been explored \cite{Zhang2012, Jing2012, Liu2013, Zhang2019b}.
%Moreover, Zhang and Zhang \cite{} proposed a general joint matrix decomposition framework for multi-view data integration.
Typically, those algorithms assume that the data matrices share a common basis matrix (or coefficient matrix), enabling the methods to perform an integrative analysis.
To reveal the common and specific patterns simultaneously, Zhang and Zhang \cite{Zhang2019a} proposed common specific matrix factorization (CSMF) by decomposing the data matrices into common and specific parts.
However, few of the existing methods consider the heterogeneity of the noise, and thus the data view of high noise may affect the analysis of that of low noise. %Zhang \textit{et al}. \cite{Zhang2012} proposed joint NMF (jNMF) for multiple types of genomic data integration. jNMF assumes that the data matrices from different sources share the same basis matrix $\boldsymbol{W}$, enabling it to discover the cross-type combinatorial patterns that are ignored by using a single type data.
%Liu \textit{et al} \cite{Liu2013} proposed multi-view NMF that relaxed the shared basis matrix assumption and used a consensus matrix to obtain a consensus clustering results.
To address this problem, Zhang and Zhang \cite{Zhang2019} employed the Bayesian approach and extended GLAD to Bayesian joint matrix decomposition (BJMD). Their experiments showed that considering the heterogeneous noise leads to superior performance in clustering. But theoretical analysis is still lacking.  %there is a lack of theoretical analysis.

\par The matrix decomposition methods mentioned above have little relevance to the underlying computational architecture. They assumed that the program is running on a single machine, and an arbitrary number of data points are accessible instantaneously. However, the huge size of data often makes it impossible to handle all of them on a single machine. Many applications collect data distributedly from different sources (e.g., labs, hospitals). The communication between them is expensive due to the limited bandwidth, and direct data sharing also raise privacy concern.
Moreover, data collected from different sources often bear strong heterogeneous noise. Therefore, developing efficient matrix decomposition methods in a distributed system is essential. The commonly used computation architecture is that the overall data $\boldsymbol{X} \in R^{m\times n}$ is distributed onto $C$ node machines that are connected to a central processor. The desired methods should scale up well to distributed big data, communicate efficiently, and adequately tackle the heterogeneity of the noise. Researchers have developed many distributed matrix decomposition methods including distributed k-means \cite{Jin2006, Bahmani2012}, distributed BPMF \cite{Yu2014, Ahn2015, Qin2019}, distributed NMF \cite{Liu2010, Benson2014, Zdunek2017}, and so on. Developing efficient distributed algorithms for matrix decomposition methods should account for the partition strategy of the distributed data and then adopt an appropriate optimization strategy.
For example, when the number of instances $n$ is vast, and the number of features is small or moderate, i.e., the transposition of the data matrix $\boldsymbol{X}$ is tall-and-skinny,
it is usually convenient to split $\boldsymbol{X}$ over columns. We should then consider which optimization strategy is suitable for current partitioned data. However, few studies explored different optimization strategies and elaborated their difference. Even more serious is that few methods tackle the heterogeneity of noise among the distributed data.

To this end, we propose a distributed Bayesian matrix decomposition model (DBMD) that extends the BJMD for big data clustering and mining. We limit our scope to the data matrix whose transposition is tall-and-skinny, distribute it by columns onto node machines, and then focus on the optimization strategies for solving DBMD. Specifically, we adopt three strategies to implement the distributed computing, including 1) the accelerated descent gradient (AGD), 2) the alternating direction method of multipliers (ADMM), and 3) the communication-efficient accurate statistical estimation (CEASE), and then investigate their convergence behavior in the distributed setting. To tackle the heterogeneous noise, we propose an optimal plug-in weighted average that minimizes the variance of the estimation.
Extensive experiments verify our theoretical results, and the real-world experiments show the scalability and the effectiveness of our methods.

\par The contributions of this paper are as follows:
1) We propose a scalable distributed Bayesian matrix decomposition model for one big data matrix whose transposition is tall-and-skinny;
2) We adopt three optimization strategies and elaborate their differences in both empirical and theoretical perspectives;
3) We propose a flexible weighted average to tackle the heterogeneous noise and provide the theoretical result that is lacked in \cite{Zhang2019}.

\section{Preliminaries and Notations}
Throughout this paper, we use three standard mathematical notations including lightface lowercase ($x$), boldface lowercase ($\boldsymbol{x}$), boldface uppercase ($\boldsymbol{X}$) characters to represent scalars, vectors and matrices, respectively. $\boldsymbol{x}_{i \cdot}$, $\boldsymbol{x}_{\cdot j}$, $x_{ij}$ represent the $i$-th row, the $j$-th column, and the entry of the $i$-th row and the $j$-th of the matrix $\boldsymbol{X}$, respectively. Given a sequence of matrices $\{\boldsymbol{X}_c\}_{c=1}^C$, we use the following notations: $(\boldsymbol{x}_{i \cdot})_c$, $(\boldsymbol{x}_{\cdot j})_c$, $(x_{ij})_c$ to denote the corresponding row, column and entry in the $c$-th matrix $\boldsymbol{X}_c$. $\bar{\boldsymbol{X}}_c = \sum_{c=1}^C \boldsymbol{X}_c / C$ is the average of the matrix sequence $\{\boldsymbol{X}_c\}_{c=1}^C$.  $c \in [C]$ indicates that $c\in \{1, 2, \dots, C \}$.

\par Suppose a matrix variate function $f:R^{m\times n} \rightarrow R$ is convex and differentiable, and let $\nabla f$ denote the gradient of $f$. We say that $f$ is strongly convex with parameter $\mu_f > 0$, if for all $\boldsymbol{X}$, $\boldsymbol{Y} \in R^{m\times n}$
\begin{equation}
 f(\boldsymbol{X}) \geq f(\boldsymbol{Y}) + \langle \boldsymbol{Y}, \boldsymbol{X}- \boldsymbol{Y} \rangle + \frac{\mu_f}{2} \norm{\boldsymbol{X}-\boldsymbol{Y}}_F^2.
\end{equation}
When $\nabla f$ is Lipschitz continuous with parameter $L_f$, we then have
\begin{equation}
\norm{\nabla f(\boldsymbol{X}) - \nabla f(\boldsymbol{Y})}_F \leq L_f \norm{f(\boldsymbol{X}) - f(\boldsymbol{X})}_F.
\end{equation}
We denote the ratio of $L_f$ to $\mu_f$ as $\kappa_f = L_f / \mu_f$ if it exists.

\section{Related Work}
\subsection{Bayesian Matrix Decomposition}
Due to the flexibility and the effectiveness of Bayesian matrix decomposition, there have been a number of studies since PPCA was developed in 1999 \cite{Tipping1999}. At the same year, Bishop proposed the Bayesian PCA \cite{bishop1999bayesian}, which can automatically determine the number of retained principal components. To account for the complex type of noise in the real-world, PCA with exponential family noise \cite{Collins2001, Li2013} and its Bayesian variants have also been proposed \cite{mohamed2009bayesian}.

\par Similar to the Bayesian PCA, Bayesian k-means has also been proposed and it can automatically determine the number of clusters \cite{Welling2006}. Salakhutdinov and Mnih proposed the Bayesian probabilistic matrix factorization (BPMF) for predicting user preference for movies \cite{Salakhutdinov2008}, which places the Gaussian priors over both the basis and coefficient matrices. Moreover, Saddiki \textit{et al}. proposed GLAD that utilized three typical distributions as priors \cite{Saddiki2015} in a more flexible manner.

\par Very recently, Zhang and Zhang proposed BJMD to tackle the heterogeneous noise of multi-view data \cite{Zhang2019}. Let's denote the multi-view data as $\boldsymbol{X}_c$, $c \in [C]$, where $\boldsymbol{X}_c \in R^{m\times n_c}$ indicates the data collected is from the $c$-th source. BJMD assumes that the observed data matrices $\boldsymbol{X}_c$ share the same basis matrix $\boldsymbol{W}$ and different coefficient matrices $\boldsymbol{H}_c$ and use the Gaussian distributions of different variances to model the heterogeneity of the noise. Similar to GLAD, BJMD puts the Laplace prior onto the basis matrix $\boldsymbol{W}$ to pursue sparsity and the Dirichlet prior onto the columns of $\boldsymbol{H}_c$ to enhance the interpretability. Moreover, two efficient algorithms via variational inference and maximum a posterior respectively have been developed for solving it. They are much faster than GLAD, and thus are applicable to relatively large data. But BJMD is still a single machine methodology.

\subsection{Distributed Matrix Decomposition}
\par  %\emph{For the classical k-means, given the centroid matrix $\boldsymbol{W}$, each point can be independently assigned to a centroid. Given an assignment of points to clusters, computing the centroid matrix can be done by taking the average of the points.}
It is known that a proper initialization for the k-means algorithm is crucial for obtaining a good final solution. But the typical single machine initialization algorithms such as k-means++ \cite{Arthur2007} are sequential, which limits its applicability to big data. Generally, scaling the k-means algorithm to distributed data is relatively easy due to its iterative nature. Distributed k-means algorithms often split data by samples. The distributed versions of k-means often focus on reducing the number of passes needed to obtain a good initialization by sampling, e.g., DKEM \cite{Jin2006} and scalable k-means++ \cite{Bahmani2012}.

\par Recently, Yu \textit{et al}. proposed a distributed BPMF by splitting the data by samples and employed a stochastic alternating direction method of multipliers (ADMM) to solve it \cite{Yu2014}. But it is common in the filtering collaborative that the user-item data $\boldsymbol{X}\in R^{m\times n}$ are very spare and of large $m$ and $n$. Splitting $\boldsymbol{X}$ by  rows is only efficient for the tall-and-skinny matrix due to the communication load. To reduce the communication load, some studies split the data matrix $\boldsymbol{X}$ over both columns and rows, and then store the blocks of $\boldsymbol{X}$ distributedly on node machines \cite{Ahn2015, Qin2019}. Then they employed distributed Monte Carlo Markov Chain methods (MCMC) for inference. There also exist distributed NMF variants that split $\boldsymbol{X}$ into blocks \cite{Liu2010, Zdunek2017}.
Moreover, Benson \textit{et al}. proposed an approximated and scalable NMF algorithm for tall-and-skinny matrices.
Inspired by Donoho and Stodden \cite{Donoho2004}, they assume that the matrix $\boldsymbol{X}$ is nearly separable, i.e.
\begin{equation}
    \boldsymbol{X} = \boldsymbol{X}(:, \mathcal{K})\boldsymbol{H} + \boldsymbol{E},
\end{equation}
where $\mathcal{K}$ is an index set with size $r$, $\boldsymbol{X}(:, \mathcal{K})$ is the submatrix of $\boldsymbol{X}$ restricted to the columns indexed by $\mathcal{K}$, and $E$ is a noise matrix.
This algorithm only requires one round iteration \cite{Benson2014}.

\section{DBMD}
\subsection{Model Construction}
Suppose that the data matrix $\boldsymbol{X} \in R^{m \times n}$, where $m$ is the number of features, $n$ is the number of samples and $n \gg m$. Since $n$ is very large, $\boldsymbol{X}$ cannot be handled by a single machine. Therefore, $\boldsymbol{X}$ is split by columns and  distributedly stored on $C$ machines: $\{\boldsymbol{X}_c\}_{c=1}^C$, where $\boldsymbol{X}_c \in R^{m \times n_c}$ and $\sum_{c=1}^C n_c = n$. Inspired by a recent study \cite{Zhang2019}, we assume that $\{\boldsymbol{X}_c\}_{c=1}^C$  share the same basis matrix $\boldsymbol{W}$ and are generated as follows
\begin{equation}\label{eq:gneration}
\boldsymbol{ X}_c = \boldsymbol{W}\boldsymbol{H}_c + \boldsymbol{E}_c,
\end{equation}
where $\boldsymbol{W} \in R^{m \times r}$, $\boldsymbol{H}_c \in R^{r \times n_c}$ and $\boldsymbol{E}_c$  is the IID Gaussian noise, $(e_{ij})_c \sim N(0, \sigma_c^2)$. We further put a zero-mean Laplace prior on $\boldsymbol{W}$ to enforce its sparsity
\begin{equation}
w_{ik} \sim p(w_{ik}| 0, \lambda_0),
\end{equation}
where the density function
\begin{equation}
p(y|\mu, \lambda) = \frac{1}{2\lambda} \exp\left( - \frac{|y - \mu|}{\lambda}\right).
\end{equation}
We put a Dirichlet prior $\text{Dir}(\boldsymbol{\alpha}_0)$ on each column $(h_{\cdot j})_c$ of $\boldsymbol{H}_c$
\begin{equation}
(\boldsymbol{h_{\cdot j}})_c \sim p((\boldsymbol{h_{\cdot j}})_c| \boldsymbol{\alpha_0}),
\end{equation}
where $\boldsymbol{\alpha}_0 > 0$ is a $r$-dimensional vector and the density function of $\text{Dir}(\boldsymbol{\alpha}_0)$ is
\begin{equation}
p(\boldsymbol{y}|\boldsymbol{\alpha}_0) = \frac{\Gamma(\sum_{i=1}^r \alpha_{i})}{\prod_{i=1}^r\Gamma(\alpha_{0i})}\prod_{i=1}^r y_i^{\alpha_{i} - 1}.
\end{equation}
The support of the Dirichlet is $y_1, \dots, y_r$, where $y_i \in (0, 1)$ and $\sum_{i=1}^{r} y_i = 1$, which is a unit simplex. Note that the Dirichlet prior restricts the $\boldsymbol{H}_c$ to be non-negative and the columns sum of $\boldsymbol{H}_c$  all equal one. Therefore, $(h_{\cdot j})_c$ can be interpreted as a vector indicates the membership of clusters. The Gaussian noise $\epsilon_c$ models the noise level in the corresponding $\boldsymbol{X}_c$
\begin{equation}
\epsilon_c \sim p(\epsilon_c|0, \sigma_c^2),
\end{equation}
where
\begin{equation}
p(y|\mu, \sigma_c^2) = \frac{1}{\sqrt{2\pi}\sigma_c}\exp \left(-\frac{(y-\mu)^2}{2\sigma_c^2} \right).
\end{equation}

\par We are interested in the posterior of $\boldsymbol{W}$ and $\boldsymbol{H}_c$. By the Bayes' theorem, it is proportional to the complete likelihood, which can be written as
\begin{flalign}
\begin{split} \label{eq:nll}
& p(\boldsymbol{W}, \boldsymbol{H}_1, \ldots, \boldsymbol{H}_C, \sigma_1^2, \ldots, \sigma_C^2, \boldsymbol{X}_1, \ldots,\boldsymbol{X}_C; \lambda_0, \boldsymbol{\alpha_0})  \\
= & p(\boldsymbol{W};\lambda)\prod_{c=1}^C p(\boldsymbol{X}_c|\boldsymbol{W}, \boldsymbol{H}_c, \sigma_c^2)p(\boldsymbol{H}_c;\boldsymbol{\alpha_0}). \\
\end{split}
\end{flalign}
Maximum a posterior can then be  formulated as minimizing the negative log likelihood
\begin{flalign}
\begin{split}\label{eq:opt1}
\min_{\boldsymbol{W}, \boldsymbol{H}_c} & \sum_{c=1}^C \frac{1}{2 \sigma_c^2}||\boldsymbol{X}_c - \boldsymbol{W}\boldsymbol{H}_c||_F^2  + \frac{1}{\lambda_0} ||\boldsymbol{W}||_1\\
     &  - \sum_{c=1}^C \sum_{k,j} (\alpha_{0k} -1) \ln (h_{kj})_c + \sum_{c=1}^C mn_c \ln \sigma_c \\
\st  & \sum_{k=1}^r (h_{kj})_c = 1, (h_{kj})_c > 0,
\end{split}
\end{flalign}
where the first term  is essentially the weighted sum of the goodness of the approximations across the $C$ node machines.
Intuitively, the weight $1/2\sigma_c^2$ gives higher importance to the clean data ($\sigma_c^2$ is small) and lower importance to the noisy data. The second term is the $L_1$-norm regularizer, which enforces the sparsity of the basis matrix $\boldsymbol{W}$. The third term is due to the Dirichlet prior and regularizes the coefficient matrices $\{\boldsymbol{H}_c\}_{c=1}^C$.
Specifically, the third term is minimized when  $(h_{kj})_c=\alpha_{0k}/\sum_{k=1}^{r} \alpha_{0k}$, which is the expectation of the Dirichlet prior. Therefore, the third term enforces $(\boldsymbol{h_{\cdot j}})_c$ to the prior and reduces the risk of overfitting.

For the ease of presentation, we denote  $\boldsymbol{\alpha} =  \boldsymbol{\alpha_0} - \boldsymbol{1}$, $\lambda = 1 / \lambda_0$ and set $\sigma_c=1$ (we will discuss the situation where $\sigma_c$ is inferred later ). Let's rewrite the optimization problem in Eq. (\ref{eq:opt1}) as follows
\begin{flalign}
\begin{split}\label{eq:opt2}
\min_{\boldsymbol{W}, \boldsymbol{H}_c} & \sum_{c=1}^C \frac{1}{2}||\boldsymbol{X}_c - \boldsymbol{W}\boldsymbol{H}_c||_F^2 + \lambda ||\boldsymbol{W}||_1\\
&  - \sum_{c=1}^C \sum_{k,j} \alpha_k \ln (h_{kj})_c\\
\st  & \sum_{k=1}^r (h_{kj})_c = 1, (h_{kj})_c > 0.
\end{split}
\end{flalign}
Note that Eq. (\ref{eq:opt2}) is bi-convex to $\boldsymbol{W}$ and $\boldsymbol{H}_c$. A common approach is to update $\boldsymbol{W}$ and $\boldsymbol{H}_c$ alternatively.

\subsection{Accelerated Gradient Decent Optimization}
Recall that $\{\boldsymbol{X}_c\}_{c=1}^C$ is distributedly stored on $C$ node machines. We store the corresponding $\boldsymbol{H}_c$ on the $c$-th node machines and store $\boldsymbol{W}$ on the central processor.
Then we can easily adopt the maximum a posterior algorithm in \cite{Zhang2019} to solve Eq. (\ref{eq:opt2}) by updating $\boldsymbol{W}$ and $\{ \boldsymbol{H}_c \}_{c=1}^C$  alternatively. Specifically, we update  $\boldsymbol{W}$ with other parameters fixed
\begin{equation}\label{eq:vanilla_update_W}
\min_{\boldsymbol{W}} \frac{1}{2}||\boldsymbol{X}_c - \boldsymbol{W}\boldsymbol{H}_c||_F^2  + \lambda ||\boldsymbol{W}||_1.\\
\end{equation}
Let's denote $f(\boldsymbol{W}) = \sum_{c=1}^C f_c(\boldsymbol{W})$, $f_c(\boldsymbol{W}) =  \frac{1}{2} \norm{\boldsymbol{X}_c -  \boldsymbol{W}\boldsymbol{H}_c}_F^2$ and $g(\boldsymbol{W}) = \lambda \norm{\boldsymbol{W}}_1$.
The objective function Eq. (\ref{eq:vanilla_update_W}) consists of the non-smooth $L_1$-norm regularizer $g(\boldsymbol{W})$ and the quadratic loss terms.
Therefore, it can be efficiently solved by the fast iterative shrinkage-thresholding algorithm (FISTA) \cite{Beck2009}.
FISTA is an  accelerated gradient decent (AGD) algorithm and enjoys a quadratic convergence rate.
Specifically, we construct two sequences $\{\boldsymbol{Y}^k\}$ and $\{\boldsymbol{W}^k\}$, and alternatively update them
\begin{equation}\label{eq:argmin_wc}
\boldsymbol{W}^k = \argmin_{ \boldsymbol{W} } g(\boldsymbol{W}) + \frac{L}{2}  \norm{ \boldsymbol{W} -  \left( \boldsymbol{Y}^k - \frac{1}{L} \nabla \tilde{f}_c(\boldsymbol{Y}^k) \right)}_F^2
\end{equation}
and
\begin{equation}
\boldsymbol{Y}^{k+1} = \boldsymbol{W}^k + \frac{\nu_k -1}{\nu_{k+1}} (\boldsymbol{W}^k - \boldsymbol{W}^{k - 1}),
\end{equation}
where $L_f =\norm{ \sum_{c=1}^C H_cH_c^T}_2 > 0$ is the Lipschitz constant of $\sum_c \nabla f_c(\boldsymbol{W})$.
$\boldsymbol{W}^k$ contains the approximate solution by minimizing the proximal function.
Eq. (\ref{eq:argmin_wc}) has the closed-form solution
$$\boldsymbol{W}^k = \mathcal{S}_{\lambda/L}\left(\boldsymbol{Y}^k - \frac{1}{L} \sum_c \nabla  \boldsymbol{f}_c(\boldsymbol{Y}^k)\right).$$
where
$$
\mathcal{S}_{\lambda/L}(\boldsymbol{X})= \text{sign}(\boldsymbol{X}) \circ \max(|\boldsymbol{X}| - \lambda/L, 0),
$$
is the soft thresholding operator, and $\circ$ is the Hadamard product.
$\boldsymbol{Y}^{k+1}$ stores the search point constructed by linearly combining the latest two approximate solutions, i.e., $\boldsymbol{W}^{k-1}$ and $\boldsymbol{W}^{k}$. The combination coefficient $\nu_{t+1}$ was carefully designed in \cite{Beck2009} as follows
\begin{equation}
\nu_{k+1} = \frac{1 + \sqrt{4\nu_k^2 + 1}}{2}.
\end{equation}

\par At each iteration, we  broadcast the current $\boldsymbol{Y}^k$ to all node machines and compute the gradients. Then we collect the computed gradients to the central processor and update $\boldsymbol{W}^k$. We iteratively update $\boldsymbol{W}^k$ and $\boldsymbol{Y}^k$ until it converges.
\begin{algorithm}[!t]
    \caption{Updating $\boldsymbol{W}$ with AGD}
    \begin{algorithmic}[1]
        \Input Initial $\boldsymbol{W}^0$, $\boldsymbol{Y}^0 = \boldsymbol{W}^0$, $k = 0$, $\nu_0 = 1$
        \Repeat
        \State Each node machine computes $\nabla f_c(\boldsymbol{Y^k})$ and sends to the central processor
        \State The central processor computes
        $$\boldsymbol{W}^k = \mathcal{S}_{\lambda/L}\left(\boldsymbol{Y}^k - \frac{1}{L} \sum_c \nabla \boldsymbol{f}_c(\boldsymbol{Y}^k)\right).$$
        \State The central processor computes $\nu_{k+1} = \frac{1 + \sqrt{4\nu_k^2 + 1}}{2}$
        \State The central processor computes
        $$\boldsymbol{Y}^{k+1} = \boldsymbol{W}_c^k + \frac{\nu_k -1}{\nu_{k+1}} (\boldsymbol{W}_c^k - \boldsymbol{W}_c^{k - 1})$$
        and broadcasts to the node machines
        \State $k \leftarrow k + 1$
        \Until{Convergence.}
        %\Output $W_c$
    \end{algorithmic}
\label{alg: agd}
\end{algorithm}

\par Updating $\{ \boldsymbol{H}_c \}_{c=1}^C$  is straightforward. We broadcast $\boldsymbol{W}$  from the central processor to the node machines, and then we can adopt the same algorithm in \cite{Zhang2019} to solve the following problem
\begin{equation}
\begin{split}\label{eq:navive_update_Hc}
\min_{\boldsymbol{H}_c} &  \frac{1}{2}||\boldsymbol{X}_c - \boldsymbol{W}\boldsymbol{H}_c||_F^2 - \sum_{c=1}^C \sum_{k,j} \alpha_k \ln (h_{kj})_c\\
\st  & \sum_{k=1}^r (h_{kj})_c = 1, (h_{kj})_c > 0,
\end{split}
\end{equation}
with respect to $\boldsymbol{H}_c$ in parallel.

\par Note that solving Eq. (\ref{eq:navive_update_Hc}) requires no data communication. Therefore, it has no difference from the single machine algorithm. However, one concern arises when we  update $\boldsymbol{W}$ in a distributed system, broadcasting $\boldsymbol{W}$ to the node machines and collecting the gradients from the node machines can be very time-consuming. The speed of inter-node communication can be much slower than that of intra-node computation in the distributed system \cite{Lan2018}. Therefore, the data communication is often the bottleneck of the distributed algorithm and  updating $\boldsymbol{W}$ requires carefully consideration. The algorithm of updating $\boldsymbol{W}$ with AGD is summarized in Fig. 1A and Algorithm \ref{alg: agd}. In the following, we discuss how to design two efficient algorithms from optimization  and statistical perspectives, respectively.
\begin{figure*}
  \centering
  \includegraphics[width=\linewidth]{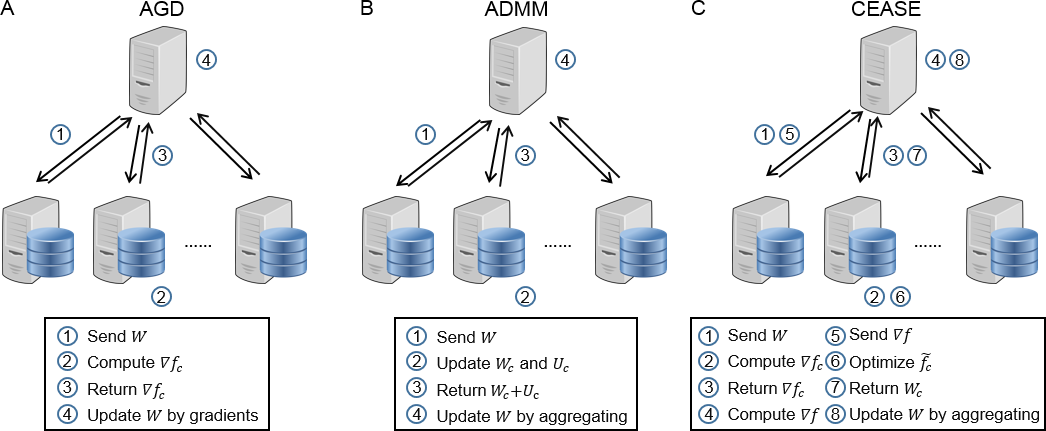}
  \caption{Illustration of updating $\boldsymbol{W}$. AGD optimizes $\boldsymbol{W}$ on the central processor.
      ADMM and CEASE update $\boldsymbol{W}_c$ on the node machines and aggregate them on the central processor.
      The steps at each round of iteration are numbered sequentially.}
\end{figure*}

\subsection{Efficient Distributed Optimization}
We adopt the ADMM to implement the algorithm of updating $\boldsymbol{W}$. Note that $\boldsymbol{W}$ in Eq. (\ref{eq:vanilla_update_W}) is a global variable. We formulate the following optimization problem
\begin{equation}
\begin{split}\label{eq:admm_update_W}
\min_{\boldsymbol{W}, \boldsymbol{W}_c } & \sum_{c=1}^C \frac{1}{2}||\boldsymbol{X}_c - \boldsymbol{W}_c\boldsymbol{H}_c||_F^2  + \lambda ||\boldsymbol{W}||_1\\
\st  & \boldsymbol{W}_c = \boldsymbol{W},
\end{split}
\end{equation}
where $\boldsymbol{W}$ is the consensus variable.
One can easily verify that Eq. (\ref{eq:admm_update_W}) is equivalent to Eq. (\ref{eq:vanilla_update_W}).
Let's write down its augmented Lagrangian
\begin{align}\label{eq:admm_aug_L}
\begin{split}
L_\rho(\boldsymbol{W}, \boldsymbol{W}_c, \boldsymbol{U}_c)  =   \sum_{c=1}^C \frac{1}{2}\norm{\boldsymbol{X}_c -\boldsymbol{W}_c\boldsymbol{H}_c}_F^2 + \lambda \norm{\boldsymbol{W}}_1 \\
+ \sum_{c=1}^{C} \langle \boldsymbol{U}_{c}, \boldsymbol{W} - \boldsymbol{W}_c \rangle  + \frac{1}{2} \rho \sum_{c=1}^{C} \norm{\boldsymbol{W} - \boldsymbol{W}_c}_F^2 ,
\end{split}
\end{align}
where $\rho > 0$ is the penalty parameter, and $\boldsymbol{U}_c$ is the corresponding dual variables.
$\boldsymbol{W}$ is the global variable stored on the central processor, and $\boldsymbol{W}_c$, $\boldsymbol{U}_c$, $\boldsymbol{H}_c$ are locally stored on the node machines. The ADMM at the ($k+1$)-th iteration consists of the following steps
\begin{flalign}
\boldsymbol{W}_c^{k+1} =     & \argmin_{\boldsymbol{W}_c} L_\rho(\boldsymbol{W}^k, \boldsymbol{W}_c, \boldsymbol{U}_c^k),   \label{eq:admm_W_c_iter} \\
\boldsymbol{W}^{k+1}      =      & \argmin_{\boldsymbol{W}} L_\rho(\boldsymbol{W}, \boldsymbol{W}_c^{k + 1}, \boldsymbol{U}_c^k),   \label{eq:admm_W_*_iter} \\
\boldsymbol{U}_{c}^{k+1} = &  \boldsymbol{U}_{c}^k + \rho (\boldsymbol{W}^{k+1} - \boldsymbol{W}_c^{k+1}).
\end{flalign}
Both Eq. (\ref{eq:admm_W_c_iter}) and Eq. (\ref{eq:admm_W_*_iter}) have the closed-form solutions
\begin{flalign}
\boldsymbol{W}_c^{k+1} =     & \left( \frac{\boldsymbol{X}_c\boldsymbol{H}_c^T + \boldsymbol{U}_c^k}{\rho} + \boldsymbol{W}^k \right)\left(\boldsymbol{I}_r + \frac{\boldsymbol{H}_c\boldsymbol{H}_c^T}{\rho}\right)^{-1},   \label{eq:admm_W_c_sol}\\
\boldsymbol{W}^{k+1}      =      & \mathcal{S}_{\lambda / C\rho}(\bar{\boldsymbol{W}}_c^{k+1} - \bar{\boldsymbol{U}}^k_c /\rho) \label{eq:admm_W_*_sol},
\end{flalign}
\begin{algorithm}[!t]
    \caption{Updating $\boldsymbol{W}$ with ADMM}
    \begin{algorithmic}[1]
        \Input initial  $\boldsymbol{W}^0$, $\boldsymbol{W}_c^0$, $\boldsymbol{U}_c^0$, $\rho$, $k = 0$
        \Repeat
        \State Computes $\boldsymbol{W}_c^k$ by Eq. (\ref{eq:admm_W_c_sol}) on each node machine
        \State Computes $\boldsymbol{W}_c - \boldsymbol{U}_c^k / \rho$ in each node machine and sends to the central processor
        \State The central processor obtains $\boldsymbol{W}^{k+1}$  by aggregating
        \[
        \boldsymbol{W}^{k+1}=\mathcal{S}_{\lambda / C\rho}(\bar{\boldsymbol{W}}_c^{k+1} - \bar{\boldsymbol{U}}^k_c /\rho)
        \]
        \State $k \leftarrow k + 1$
        \Until{Convergence}
        %\Output $\boldsymbol{W}$
    \end{algorithmic}
\label{alg: admm}
\end{algorithm}
where $\mathcal{S}_{\lambda / C\rho}$ is the soft thresholding operator with parameter $\lambda / C\rho$. Now, $\boldsymbol{W}_c$ is optimized locally on the node machines in parallel and requires no data communication. $\boldsymbol{U}_c$ is also locally optimized in parallel. The only step that involving data communication is updating the global variable $\boldsymbol{W}$. It is simply $\bar{\boldsymbol{W}}_c^{k+1} - \bar{\boldsymbol{U}}_c^k /\rho$ and taking soft thresholding operation. We compute $\boldsymbol{W}_c^k -  \boldsymbol{U}_c^k$ on each node machine in parallel and then aggregate the results on the central node. We then apply the thresholding operator to obtain the new $\boldsymbol{W}$ and broadcast  it to all node machines. Note that at each iteration, we only need to collect and broadcast a matrix of size $m\times r$. Therefore, the data communication load has been significantly reduced.
The algorithm of updating $\boldsymbol{W}$ with ADMM is summarized in Fig. 1B and Algorithm \ref{alg: admm}.

\subsection{Efficient Distributed Statistical Inference}
We can also solve Eq. (\ref{eq:vanilla_update_W}) from a statistical perspective. Recent advances on distributed statistical inference \cite{Jordan2019, fan2019communication} provide us with powerful tools.
Here we use the CEASE to develop an efficient distributed statistical procedure due to its effectiveness.
Let $\boldsymbol{W}$ at the $k$-th iteration be $\boldsymbol{W}^k$. Following the scheme of CEASE, each node machine computes
\begin{align}\label{eq:cease_update_W_c}
\begin{split}
\boldsymbol{W}_c^k = \argmin_{\boldsymbol{W}} \tilde{f}_c(\boldsymbol{W}),
\end{split}
\end{align}
where
\begin{align}\label{eq:cease_loss}
\begin{split}
\tilde{f}_c(\boldsymbol{W}) = f_c(\boldsymbol{W}) - \langle \nabla f_c(\boldsymbol{W}^k) - \nabla f(\boldsymbol{W}^k), \boldsymbol{W} \rangle
\\+ \frac{\gamma}{2} \norm{\boldsymbol{W} - \boldsymbol{W}^k}_F^2 + g(\boldsymbol{W}),
\end{split}
\end{align}
\begin{algorithm}[tbh]
    \caption{Updating $\boldsymbol{W}$ with CEASE}
    \begin{algorithmic}[1]
        \Input initial  $\boldsymbol{W}^0$, $\boldsymbol{W}_c^0$,  $\gamma$, $k = 0$
        \Repeat
        \State Computes $\nabla f_k(\boldsymbol{W}^k)$ on each node machine and sends to the central processor
        \State The central processor computes $\nabla f(\boldsymbol{W}^k) = 1 / C \sum_{c=1}^C \nabla f_k(\boldsymbol{W}^k)$
        and broadcasts to node machines. \label{algo:cease:grad}
        \State Computes $\boldsymbol{W}_c^k$ on each node machine by FISTA and sends to the central processor
        \State The central processor aggregates $\boldsymbol{W}^{k+1}  = \bar{\boldsymbol{W}}_c^k$  \label{algo:cease:wc}

        \State $k \leftarrow k + 1$
        \Until{Convergence}
    \end{algorithmic}
\label{alg: cease}
\end{algorithm}
where $\gamma\geq 0$ is the parameter of the proximal point algorithm. It is notably that the $f_c(\boldsymbol{W}) - \langle \nabla f_c(\boldsymbol{W}^k) - \nabla f(\boldsymbol{W}^k), \boldsymbol{W} \rangle $ is referred as the gradient-enhanced loss (GEL) function, in which the loss of the local data $\boldsymbol{X}_c$ is enhanced by the global gradient $\nabla f(\boldsymbol{W}^k)$. Conceptually, the global gradient adaptively enhances the similarity of the $f_c$ and thus accelerates the convergence. This idea of using GEL has also been explored in \cite{Shamir2014, Jordan2019}. When $\lambda = 0$,  there exists a closed-form solution. While $\lambda > 0$, Eq. (\ref{eq:cease_update_W_c}) consists of the non-smooth $L_1$-norm regularizer $g(\boldsymbol{W})$ and the remaining smooth terms that are merely sums of the quadratic loss term and the linear terms. It can also be efficiently solved by FISTA. The optimizing of $\boldsymbol{W}_c^k$ requires no data communication. The central processor collects  $\boldsymbol{W}_c^k$ and aggregates by taking average $\boldsymbol{W}^{k+1} = \frac{1}{C}\sum_{c=1}^C \boldsymbol{W}_c^k$. The whole algorithm of updating $W$ with CEASE is summarized in Fig. 1C and Algorithm \ref{alg: cease}. The data communication load of CEASE is twice of the ADMM due to  additionally broadcasting and sending $\nabla f (\boldsymbol{W}^k)$.

\subsection{Tackle the Heterogeneous Noise}
\par ADMM and CEASE do not account for the heterogeneity of the noise. Fortunately, they can be easily extended by plugging in the weighted average to achieve that. Given $\boldsymbol{W}_c$ and $\boldsymbol{H}_c$, the variance of the noise of each $\boldsymbol{X}_c$ is computed by $\sigma_c^2 = \norm{\boldsymbol{X}_c - \boldsymbol{W}_c \boldsymbol{H}_c}_F^2 / mn_c$, which can be derived by the maximum likelihood as showed in \cite{Zhang2019}. Then $\boldsymbol{H}_c$ can be separately updated on each node machine, and thus different noise levels will not influence the results. However, aggregating $\boldsymbol{W}_c$  corresponding to  different levels of noise by taking average is problematic. Let's consider the ADMM algorithm. Intuitively, $\boldsymbol{W}_c$ inferred from $\boldsymbol{X}_c$ of lower noise is more believable. Inspired by \cite{Zhang2019}, we adopt the weighted average to aggregate $\boldsymbol{W}_c$
\begin{equation}
\tilde{\boldsymbol{W}_c} =\sum_{c=1}^{C} \frac{1/ \sigma_c^2}{\sum_{c=1}^C 1/\sigma_c^2} \boldsymbol{W}_c.
\end{equation}
Note that the weight of $\boldsymbol{W}_c$ with small variance $\sigma_c^2$ is higher. CEASE can also be easily modified by plugging in the weighted average. Note that CEASE takes the average of both the gradients  $\nabla f_k(\boldsymbol{W}^k)$ (Algorithm \ref{alg: cease}, line~\ref{algo:cease:grad}) and  $\boldsymbol{W}_c$ (Algorithm \ref{alg: cease}, line~\ref{algo:cease:wc}) on the central processor. Thus, we can also use the weighted average versions to aggregate the gradients and $\boldsymbol{W}_c$, respectively.

\subsection{Computational Remarks on Updating $\boldsymbol{W}$}

\subsubsection{The Optimality of the Weighted Average}
\begin{assumption}\label{assumption:matrix normal}
    $\boldsymbol{X}_c$ follows the matrix normal distribution with isotropic covariances, $\boldsymbol{X}_c \sim \mathcal{MN}(\boldsymbol{X}_c^*, \sigma_c I, \sigma_c I)$.
\end{assumption}
\begin{assumption}\label{assumption:Hc}
    $\{\boldsymbol{H}_c\}_{c=1}^C$ are of the same size and each column of $H_c$ follows the same distribution and the expectation $E(\boldsymbol{H}_c \boldsymbol{H}_c^T)$ exists.
\end{assumption}
Note that the Assumption \ref{assumption:matrix normal} is equivalent to the generation process Eq. (\ref{eq:gneration}). The estimated $\tilde{\boldsymbol{W}}$ and $\bar{\boldsymbol{W}}$ are also random matrices. We use the sum of the entry-wise variances to measure the variances, e.g., $\var(\tilde{\boldsymbol{W}}) = \sum_{i, k} \var(\tilde{w}_{ik})$.
For convenience, we assume that $\{\boldsymbol{H}_c\}_{c=1}^C$ are of the same size.
\begin{theorem}\label{thn:variance reduction}
Let Assumptions \ref{assumption:matrix normal} and \ref{assumption:Hc} hold and let $\lambda = 0$ and $k \to \infty$. The variance ratio
\begin{equation}
\frac{\var(\tilde{\boldsymbol{W}})}{\var(\bar{\boldsymbol{W}}) } = \frac{\sum_c C/(1/ \sigma_c^2)}{\sum_c \sigma_c^2 / C}\leq 1
\end{equation}
The equality reaches if and only if all $\sigma_c^2$ are equal. Moreover, the $\tilde{\boldsymbol{W}}$ is the optimal weighted average that minimizes the variance.
\end{theorem}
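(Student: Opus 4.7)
The plan is to compute the closed-form for each local estimator $\boldsymbol{W}_c$ at convergence, express it as the truth plus a zero-mean Gaussian perturbation whose variance is a common matrix functional of $\boldsymbol{H}_c$ scaled by $\sigma_c^2$, and then reduce the claim to the arithmetic-mean/harmonic-mean inequality plus a one-line Lagrangian argument.

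First I would exploit $\lambda=0$ and $k\to\infty$. Under the ADMM (or CEASE) iteration with no $L_1$ penalty, each local subproblem reduces at its fixed point to the unconstrained least-squares problem $\min_{\boldsymbol{W}_c}\tfrac12\norm{\boldsymbol{X}_c-\boldsymbol{W}_c\boldsymbol{H}_c}_F^2$, whose unique minimizer is $\boldsymbol{W}_c=\boldsymbol{X}_c\boldsymbol{H}_c^{T}(\boldsymbol{H}_c\boldsymbol{H}_c^{T})^{-1}$. Invoking Assumption~\ref{assumption:matrix normal} to substitute $\boldsymbol{X}_c=\boldsymbol{W}^{*}\boldsymbol{H}_c+\boldsymbol{E}_c$ with $\boldsymbol{E}_c$ having IID $N(0,\sigma_c^2)$ entries gives $\boldsymbol{W}_c=\boldsymbol{W}^{*}+\boldsymbol{E}_c\boldsymbol{H}_c^{T}(\boldsymbol{H}_c\boldsymbol{H}_c^{T})^{-1}$. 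Each entry of $\boldsymbol{W}_c-\boldsymbol{W}^{*}$ is a linear combination of the Gaussian noise terms, so conditional on $\boldsymbol{H}_c$ it is zero-mean Gaussian with $\var((w_{ik})_c)=\sigma_c^2\,[(\boldsymbol{H}_c\boldsymbol{H}_c^{T})^{-1}]_{kk}$. Assumption~\ref{assumption:Hc} (common size and common column distribution, with $E(\boldsymbol{H}_c\boldsymbol{H}_c^{T})$ existing) lets me treat the $\boldsymbol{H}_c$-dependent factor as a common constant $v_{ik}$ that does not depend on $c$—either by conditioning on the event that the $\boldsymbol{H}_c\boldsymbol{H}_c^{T}$ coincide, or by passing to the common limit given by the law of large numbers on the columns.

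With a single common factor $V=\sum_{i,k}v_{ik}$, independence of the $\boldsymbol{E}_c$ across $c$ propagates to independence of the $\boldsymbol{W}_c$, and summing entry-wise variances yields
\[
\var(\bar{\boldsymbol{W}})=\frac{V}{C^{2}}\sum_{c=1}^{C}\sigma_c^{2},\qquad \var(\tilde{\boldsymbol{W}})=\sum_{c=1}^{C}\!\left(\frac{1/\sigma_c^{2}}{\sum_{d}1/\sigma_d^{2}}\right)^{\!2}\!V\sigma_c^{2}=\frac{V}{\sum_{c=1}^{C}1/\sigma_c^{2}}.
\]
Dividing gives $\var(\tilde{\boldsymbol{W}})/\var(\bar{\boldsymbol{W}})=C^{2}/\bigl[(\sum_c\sigma_c^{2})(\sum_c 1/\sigma_c^{2})\bigr]$, which is exactly the harmonic mean of $\{\sigma_c^{2}\}$ divided by their arithmetic mean. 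The claimed inequality is then the classical AM-HM inequality, with equality iff all $\sigma_c^{2}$ coincide.

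For optimality I would minimize $\sum_c w_c^{2}\sigma_c^{2}$ over $\sum_c w_c=1$, which up to the constant $V$ is the variance of a generic convex combination of the independent $\boldsymbol{W}_c$. A single Lagrange multiplier gives the stationarity condition $2w_c\sigma_c^{2}=\mu$, hence $w_c^{*}\propto 1/\sigma_c^{2}$; the objective is strictly convex, so this critical point is the unique global minimum, and the minimizing weights match those defining $\tilde{\boldsymbol{W}}$. The main obstacle in making the argument fully rigorous is handling the randomness of $\boldsymbol{H}_c$ cleanly: the cleanest route is to condition on $\boldsymbol{H}_c$ throughout (so all variance calculations are conditional Gaussians) and then appeal to Assumption~\ref{assumption:Hc} to guarantee that the common factor $[(\boldsymbol{H}_c\boldsymbol{H}_c^{T})^{-1}]_{kk}$ can indeed be treated as the same across $c$, either in distribution or in the limit implied by $k\to\infty$.
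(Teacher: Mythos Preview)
Your overall strategy matches the paper's exactly: show $\var(\boldsymbol{W}_c)=\sigma_c^{2}\cdot V$ for a factor $V$ common across $c$, reduce the ratio to the harmonic--arithmetic mean inequality, then a one-line Lagrangian for optimality. The ratio computation, the AM--HM identification (which the paper phrases as ``arithmetic and geometric means''), and the KKT step are all correct and essentially identical to the paper's argument.

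There is, however, a gap in your first step. You assert that under ADMM (or CEASE) with $\lambda=0$ the local subproblem ``reduces at its fixed point to the unconstrained least-squares problem,'' yielding $\boldsymbol{W}_c=\boldsymbol{X}_c\boldsymbol{H}_c^{T}(\boldsymbol{H}_c\boldsymbol{H}_c^{T})^{-1}$. This is not what the algorithm produces: the ADMM $\boldsymbol{W}_c$-update always retains the proximal term $\tfrac{\rho}{2}\norm{\boldsymbol{W}-\boldsymbol{W}_c}_F^{2}$ and the dual term $\langle\boldsymbol{U}_c,\boldsymbol{W}-\boldsymbol{W}_c\rangle$, regardless of whether $g\equiv 0$; at the genuine fixed point one in fact has $\boldsymbol{W}_c^{*}=\boldsymbol{W}^{*}$ (the global minimizer of $\sum_c f_c$), not the local least-squares solution. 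The same objection applies to CEASE, whose gradient-enhanced loss does not collapse to $f_c$ alone. The paper instead works directly with the closed-form ADMM update
\[
\boldsymbol{W}_c^{k+1}=\Bigl(\tfrac{1}{\rho}\boldsymbol{X}_c\boldsymbol{H}_c^{T}+\tfrac{1}{\rho}\boldsymbol{U}_c^{k}+\boldsymbol{W}^{k}\Bigr)\Bigl(\boldsymbol{I}_r+\tfrac{1}{\rho}\boldsymbol{H}_c\boldsymbol{H}_c^{T}\Bigr)^{-1},
\]
invokes ADMM convergence to treat $\boldsymbol{U}_c^{*}$ as a constant, and then reads off from the matrix-normal distribution of $\boldsymbol{X}_c\boldsymbol{H}_c^{T}$ that $\var(\boldsymbol{W}_c)=\sigma_c^{2}\,\tr(\boldsymbol{I}\otimes\boldsymbol{\Lambda}_c\boldsymbol{H}_c^{T}\boldsymbol{H}_c\boldsymbol{\Lambda}_c)$ with $\boldsymbol{\Lambda}_c=(\boldsymbol{I}+\boldsymbol{H}_c\boldsymbol{H}_c^{T})^{-1}$. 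The common-factor structure you need is still present---just with $\boldsymbol{\Lambda}_c\boldsymbol{H}_c^{T}\boldsymbol{H}_c\boldsymbol{\Lambda}_c$ in place of your $(\boldsymbol{H}_c\boldsymbol{H}_c^{T})^{-1}$---so once you substitute the correct update formula the remainder of your argument goes through unchanged.
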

\begin{proof}[Proof]
    We first consider the ADMM algorithm.  %\ref{thn:variance reduction}
    Both $f$ and $g$ are closed, proper and convex. A previous study \cite[Section 3.2.1]{boyd2011distributed} has shown that the dual variable $\boldsymbol{U}_c$ converges to $\boldsymbol{U}_c^*$ with  $k \to \infty $. So we treat $\boldsymbol{U}_c^*$ as a constant matrix. Note that
    \begin{equation}
    \boldsymbol{X}_c \boldsymbol{H}_c^T / \rho \sim \mathcal{MN}(\boldsymbol{X}_c^*\boldsymbol{H}_c^T/\rho, \sigma_cI/\rho, \sigma_c \boldsymbol{H}_c^T\boldsymbol{H}_c/\rho).
    \end{equation}
    Based on the Eq. (\ref{eq:admm_W_c_iter}) and the property of the matrix normal distribution, we have
    \begin{equation}
    \var(\boldsymbol{W}_c) =\sigma_c^2 \tr(I \otimes \boldsymbol{\Lambda}_c \boldsymbol{H}_c^T \boldsymbol{H}_c \boldsymbol{\Lambda}_c),
    \end{equation}
    where $\Lambda_c = (\boldsymbol{I} + \boldsymbol{H}_c \boldsymbol{H}_c^T)^{-1}$. The variance ratio
    \begin{align}
    \begin{split}
    \frac{\var(\tilde{\boldsymbol{W}})}{\var(\bar{\boldsymbol{W}}) } &=\frac{\sum_c \frac{1/\sigma_c^4}{(\sum_c 1/ \sigma_c^2)^2}\var(\boldsymbol{W}_c)}{\sum_c \frac{1}{C^2}\var(\boldsymbol{W}_c)}  \\
    &= \frac{\sum_c \frac{1 / \sigma_c^2}{(\sum_c 1/ \sigma_c^2)^2}\tr(\boldsymbol{\Lambda}_c \boldsymbol{H}_c^T \boldsymbol{H}_c^T \boldsymbol{\Lambda}_c)}{\sum_c \frac{\sigma_c^2}{C^2} \tr(\boldsymbol{\Lambda}_c \boldsymbol{H}_c^T \boldsymbol{H}_c^T)} \\
    & = \frac{\sum_c C/(1/ \sigma_c^2)}{\sum_c \sigma_c^2 / C}  \leq 1
    \end{split}
    \end{align}
    Based on the inequality of arithmetic and geometric means, the ration is not greater than 1.
    \par
    Then we prove the optimality. Consider minimizing the variance of the weighted average $v_c\boldsymbol{W}_c$.
    \begin{equation}\label{eq:min_v}
    \min \sum_c v_c^2 \sigma_c^2 a,
    \end{equation}
    with $\boldsymbol{v} \geq 0, \sum_c v_c = 1$. Denote $a = \tr(\boldsymbol{\Lambda}_c \boldsymbol{H}_c^T \boldsymbol{H}_c^T)$. Eq. (\ref{eq:min_v}) is a constrained quadratic programming. The KKT condition
    \begin{equation}
    v_c \sigma_c^2 a - \mu = 0, c \in [C]
    \end{equation}
    where $\mu > 0$ is the dual variable of the Lagrangian. Then $v_i = \frac{1 / \sigma_c^2}{\sum_c 1 / \sigma_c^2}$ and $\mu = \frac{a}{\sum_c 1 / \sigma_c^2}$ is the solution of the system of equations.
\end{proof}
Note that the numerator is the harmonic average of $\sigma_c^2$ and the denominator is the arithmetic average of $\sigma_c^2$. Theorem \ref{thn:variance reduction} shows that the weighted average  $\tilde{\boldsymbol{W}}$ can  reduce the variance of $\bar{\boldsymbol{W}}$, and the weights are optimal. The result holds for both ADMM and CEASE algorithms. When $\{\boldsymbol{H}_c\}_{c=1}^C$ are of different size, there exists similar result that can be proved with the same procedure.

\subsubsection{Convergence Rate}
In this section, we discuss the convergence rate of updating $\boldsymbol{W}$. In particular, we concern about the effect of the number of instances increasing on the convergence rate.
\begin{lemma}\label{lm:strong-convex}
   $f(\boldsymbol{W})$ is strongly convex with parameter $\mu_f = \sigma_{\text{min}}(\sum_{c=1}^C \boldsymbol{H}_c\boldsymbol{H}_c^T)$, and
   $\nabla f(\boldsymbol{W})$  is Lipschitz continuous with parameter $L_f = \sigma_{\text{max}}(\sum_{c=1}^C \boldsymbol{H}_c\boldsymbol{H}_c^T)$, where
   $\sigma_{\text{min}}(\boldsymbol{A})$ and $\sigma_{\text{max}}(\boldsymbol{A})$ indicate the smallest and the largest eigenvalue of matrix $A$, respectively. The ration $\kappa_f = L_f / \mu_f$ exists.
\end{lemma}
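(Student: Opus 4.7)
The plan is to exploit the fact that $f$ is a quadratic form in $\boldsymbol{W}$, so that both the strong-convexity parameter and the Lipschitz constant of $\nabla f$ are completely determined by the spectrum of the common ``curvature'' matrix $\boldsymbol{M} \coloneqq \sum_{c=1}^C \boldsymbol{H}_c \boldsymbol{H}_c^T$. Since $\boldsymbol{M}$ is symmetric PSD, identifying $\mu_f$ and $L_f$ then reduces to standard eigenvalue bookkeeping.

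First I would compute the gradient directly from $f_c(\boldsymbol{W}) = \tfrac12\|\boldsymbol{X}_c - \boldsymbol{W}\boldsymbol{H}_c\|_F^2$, obtaining
\begin{equation*}
\nabla f(\boldsymbol{W}) = \sum_{c=1}^C (\boldsymbol{W}\boldsymbol{H}_c - \boldsymbol{X}_c)\boldsymbol{H}_c^T = \boldsymbol{W}\boldsymbol{M} - \sum_{c=1}^C \boldsymbol{X}_c \boldsymbol{H}_c^T.
\end{equation*}
Consequently $\nabla f(\boldsymbol{X}) - \nabla f(\boldsymbol{Y}) = (\boldsymbol{X}-\boldsymbol{Y})\boldsymbol{M}$, so Lipschitz continuity follows immediately from submultiplicativity:
\begin{equation*}
\|(\boldsymbol{X}-\boldsymbol{Y})\boldsymbol{M}\|_F \le \|\boldsymbol{M}\|_2 \|\boldsymbol{X}-\boldsymbol{Y}\|_F = \sigma_{\max}(\boldsymbol{M})\|\boldsymbol{X}-\boldsymbol{Y}\|_F,
\end{equation*}
which identifies $L_f = \sigma_{\max}(\boldsymbol{M})$.

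For strong convexity, since $f$ is quadratic it suffices to lower-bound $\langle \boldsymbol{X}-\boldsymbol{Y}, \nabla f(\boldsymbol{X}) - \nabla f(\boldsymbol{Y})\rangle = \tr\bigl((\boldsymbol{X}-\boldsymbol{Y})^T(\boldsymbol{X}-\boldsymbol{Y})\boldsymbol{M}\bigr)$ by $\mu_f \|\boldsymbol{X}-\boldsymbol{Y}\|_F^2$. I would write $\boldsymbol{M} = \boldsymbol{Q}\boldsymbol{\Lambda}\boldsymbol{Q}^T$ with $\boldsymbol{\Lambda} = \mathrm{diag}(\lambda_1,\ldots,\lambda_r)$, set $\boldsymbol{D} = \boldsymbol{X}-\boldsymbol{Y}$, and compute
\begin{equation*}
\tr(\boldsymbol{D}^T\boldsymbol{D}\boldsymbol{M}) = \sum_{i=1}^r \lambda_i \|\boldsymbol{D}\boldsymbol{q}_i\|_2^2 \ge \sigma_{\min}(\boldsymbol{M}) \sum_{i=1}^r \|\boldsymbol{D}\boldsymbol{q}_i\|_2^2 = \sigma_{\min}(\boldsymbol{M}) \|\boldsymbol{D}\|_F^2,
\end{equation*}
where the last equality uses orthogonality of $\boldsymbol{Q}$. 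Plugging this into the standard integral identity $f(\boldsymbol{X}) - f(\boldsymbol{Y}) - \langle \nabla f(\boldsymbol{Y}), \boldsymbol{X}-\boldsymbol{Y}\rangle = \tfrac12\langle \boldsymbol{X}-\boldsymbol{Y}, \nabla f(\boldsymbol{X})-\nabla f(\boldsymbol{Y})\rangle$ (valid because $f$ is quadratic) yields $\mu_f = \sigma_{\min}(\boldsymbol{M})$. Equivalently, one can vectorize and observe that the Hessian of $f$ with respect to $\vect(\boldsymbol{W})$ is $\boldsymbol{M}\otimes \boldsymbol{I}_m$, whose eigenvalues are those of $\boldsymbol{M}$ each repeated $m$ times; this confirms both constants simultaneously.

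There is no serious obstacle here; the only real subtlety is the caveat that $\kappa_f = L_f/\mu_f$ exists only when $\boldsymbol{M}$ is strictly positive definite. I would briefly note that this is equivalent to the concatenated matrix $[\boldsymbol{H}_1\,\cdots\,\boldsymbol{H}_C]$ having full row rank $r$, which is the practically reasonable regime (enough samples, no redundant latent factors), and state the lemma under this mild non-degeneracy assumption.
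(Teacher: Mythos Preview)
Your proposal is correct and follows essentially the same route as the paper: both compute $\nabla f(\boldsymbol{W}) = \boldsymbol{W}\boldsymbol{M} - \sum_c \boldsymbol{X}_c\boldsymbol{H}_c^T$ and obtain $L_f$ via the Frobenius--spectral submultiplicativity bound $\|(\boldsymbol{X}-\boldsymbol{Y})\boldsymbol{M}\|_F \le \sigma_{\max}(\boldsymbol{M})\|\boldsymbol{X}-\boldsymbol{Y}\|_F$, and both identify $\mu_f$ through the Hessian $\nabla^2 f = \boldsymbol{M}\otimes \boldsymbol{I}_m$ (the paper states this directly, while you give it as your ``equivalently'' alternative after a first-order trace argument). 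Your added remark that $\kappa_f$ is well-defined only when $[\boldsymbol{H}_1\,\cdots\,\boldsymbol{H}_C]$ has full row rank is a useful clarification the paper leaves implicit.
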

\begin{proof}[Proof]
    $f(\boldsymbol{W})$ is twice differentiable and we have
    \begin{equation}
    \nabla f(\boldsymbol{W}) =\sum_{c=1}^C (\boldsymbol{W}\boldsymbol{H}_c - \boldsymbol{X}_c) \boldsymbol{H}_c^T, \nabla^2 f(\boldsymbol{W}) = \sum_{c=1}^C (\boldsymbol{H}_c \boldsymbol{H}_c^T) \otimes I
    \end{equation}
    Note that
    \begin{equation}
    \nabla^2 f(\boldsymbol{W})\succeq \sigma_{\text{min}}\left(\sum_{c=1}^C \boldsymbol{H}_c \boldsymbol{H}_c^T\right) I
    \end{equation}
    It implies that $f(\boldsymbol{W})$ is strongly convex with parameter $\mu_f = \sigma_{\text{min}}(\sum_{c=1}^C \boldsymbol{H}_c\boldsymbol{H}_c^T)$.
    For $\forall \boldsymbol{W}, \boldsymbol{Y} \in R^{m\times r}$, we have
    \begin{align}
        \begin{split}
    \norm{\nabla f(\boldsymbol{W}) - \nabla f(\boldsymbol{Y})}_F &= \norm{(\boldsymbol{W}-\boldsymbol{Y}) \sum_{c=1}^C\boldsymbol{H}_c \boldsymbol{H}_c^T}_F \\
    & \leq  L_f \norm{\boldsymbol{W} - \boldsymbol{Y}}_F,
    \end{split}
    \end{align}
    where $L_f = \sigma_{\text{max}}(\sum_{c=1}^C \boldsymbol{H}_c \boldsymbol{H}_c^T)$. $\nabla f(\boldsymbol{W})$  is Lipschitz continuous with parameter $L_f$.
\end{proof}

\begin{lemma}\label{lm:nbla}
    There exists $\delta > 0$, such that $\norm{\nabla^2 f(\boldsymbol{W}) - \nabla^2 f_c(\boldsymbol{W})}_F \leq \delta$ holds for all $c \in [C]$ and $\boldsymbol{W} \in R^{m\times r}$.
\end{lemma}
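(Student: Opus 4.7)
The plan is to exploit the fact that $f_c(\boldsymbol{W})=\tfrac{1}{2}\|\boldsymbol{X}_c-\boldsymbol{W}\boldsymbol{H}_c\|_F^2$ is a pure quadratic in $\boldsymbol{W}$, so both $\nabla^2 f(\boldsymbol{W})$ and $\nabla^2 f_c(\boldsymbol{W})$ are constant matrices independent of $\boldsymbol{W}$. The whole statement then reduces to bounding a single deterministic quantity. In particular, uniformity over $\boldsymbol{W}$ is free: there is nothing to optimize over.

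First, I would recall the Hessian formula derived inside the proof of Lemma~\ref{lm:strong-convex}, namely $\nabla^2 f(\boldsymbol{W})=\sum_{c'=1}^{C}(\boldsymbol{H}_{c'}\boldsymbol{H}_{c'}^T)\otimes I_m$. The very same calculation applied to a single summand yields $\nabla^2 f_c(\boldsymbol{W})=(\boldsymbol{H}_c\boldsymbol{H}_c^T)\otimes I_m$. Subtracting,
\begin{equation*}
\nabla^2 f(\boldsymbol{W})-\nabla^2 f_c(\boldsymbol{W})=\sum_{c'\neq c}(\boldsymbol{H}_{c'}\boldsymbol{H}_{c'}^T)\otimes I_m,
\end{equation*}
which manifestly does not depend on $\boldsymbol{W}$. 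Thus $\|\nabla^2 f(\boldsymbol{W})-\nabla^2 f_c(\boldsymbol{W})\|_F$ is a constant for each $c$, and I only need a uniform bound over $c\in[C]$.

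Next, I would bound this constant using the triangle inequality together with the Kronecker-product identity $\|A\otimes I_m\|_F=\sqrt{m}\,\|A\|_F$, obtaining
\begin{equation*}
\bigl\|\nabla^2 f(\boldsymbol{W})-\nabla^2 f_c(\boldsymbol{W})\bigr\|_F\le \sqrt{m}\sum_{c'\neq c}\|\boldsymbol{H}_{c'}\boldsymbol{H}_{c'}^T\|_F.
\end{equation*}
Taking the maximum over $c$ then gives an explicit admissible choice, e.g.\ $\delta=\sqrt{m}\sum_{c'=1}^{C}\|\boldsymbol{H}_{c'}\boldsymbol{H}_{c'}^T\|_F$, which is finite and strictly positive (excluding the degenerate case $\boldsymbol{H}_{c'}\equiv 0$ for all $c'$, which is precluded by the Dirichlet support constraint forcing each column of every $\boldsymbol{H}_{c'}$ to lie in the open simplex).

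There is no real obstacle in this argument; the only thing to flag is the $\boldsymbol{W}$-independence of the Hessian, which is a direct consequence of $f_c$ being quadratic and is the reason the uniform bound in $\boldsymbol{W}$ is trivial. If one wanted a more intrinsic bound, the Dirichlet prior implies each column of $\boldsymbol{H}_{c'}$ satisfies $\|(\boldsymbol{h}_{\cdot j})_{c'}\|_2\le \|(\boldsymbol{h}_{\cdot j})_{c'}\|_1=1$, yielding $\|\boldsymbol{H}_{c'}\boldsymbol{H}_{c'}^T\|_F\le\|\boldsymbol{H}_{c'}\|_F^2\le n_{c'}$, and hence $\delta\le\sqrt{m}\,n$ suffices, but any finite constant will do for the purposes of this lemma.
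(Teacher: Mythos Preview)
Your proposal is correct and follows essentially the same approach as the paper: compute $\nabla^2 f(\boldsymbol{W})-\nabla^2 f_c(\boldsymbol{W})=\sum_{l\neq c}(\boldsymbol{H}_l\boldsymbol{H}_l^T)\otimes I$, observe it is constant in $\boldsymbol{W}$, and then set $\delta=\max_{c\in[C]}\delta_c$. The paper's proof simply defines $\delta_c$ to be this norm and stops, whereas you additionally supply the explicit upper bound via the triangle inequality and the identity $\|A\otimes I_m\|_F=\sqrt{m}\,\|A\|_F$; this extra step is fine but not needed for the lemma as stated.
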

Lemma \ref{lm:strong-convex} and \ref{lm:nbla} characterize the smoothing part of the objective function. Suppose that the Assumption \ref{assumption:Hc} holds. It is easy to verify that $L_f$ and $\mu_f$ grow linearly with the increasing of the number of instances ($n_c$) in each node machine.
\begin{proof}[Proof]
    Note
    \begin{equation}
    \norm{\nabla^2 f(\boldsymbol{W}) - \nabla^2 f_c(\boldsymbol{W})}_2 =  \norm{\sum_{l\neq c}\boldsymbol{H}_l\boldsymbol{H}_l^T \otimes I}_2 = \delta_c.
    \end{equation}
    Thus, there exists and $\delta = \max \{\delta_c\}_{c=1}^C$.
\end{proof}

%\begin{theorem}\label{thn:admm-convergence}
%    Introduce
%\begin{equation}
%    \boldsymbol{Z}^k :=(\boldsymbol{W}^k; \boldsymbol{W}_1^k;\dots ;\boldsymbol{W}_C^k;\boldsymbol{U}_1^k;\dots ; \boldsymbol{U}_C^k)
%\end{equation}
%Given $\rho$, consider  $\{ \boldsymbol{Z}^k \}$ generated by Algorithm  \ref{alg: admm}. We have:
%\begin{equation}
%       \norm{\boldsymbol{Z}^{k+1} - \boldsymbol{Z}^0}_F^2 \leq  \norm{\boldsymbol{Z}^{*} - \boldsymbol{Z}^0}_F^2 M_1 \tau_2^k,
%\end{equation}
%where $\boldsymbol{Z}^{*}$ is the KKT point, $\tau_2 =\frac{1}{1 + 2 / (\rho \mu_f^{-1} + \rho^{-1} L_f)}$.
%\end{theorem}

\par The FISTA algorithm is known to have a quadratic convergence rate \cite[Theorem 4.4]{Beck2009}. But the contraction factor was not given in this work. Tao \textit{et al} \cite[Thereom 5.5]{Tao2016} provided the contraction factor $\tau_1$ of the local convergence. The contraction factor of FISTA, $\tau_1$ depends on the structural parameter $\kappa_f$. The contraction factor of the CEASE algorithm was given in \cite[Theorem 3.1]{fan2019communication}.
\begin{theorem}\label{thn:cease-convergence}
    Consider $\{ \boldsymbol{W}^k \}$ generated by Algorithm \ref{alg: cease}. Suppose that $\delta^2 / (\mu_f + \gamma)^2< \mu_f / (\mu_f + 2\gamma)$. We have
    \begin{equation}
    \norm{\boldsymbol{W}^{k+1} - \boldsymbol{W}^*}_F \leq \norm{\boldsymbol{W}^{k} - \boldsymbol{W}^*}_F \tau_3,
    \end{equation}
    where $\boldsymbol{W}^*$ is the KKT point, $\tau_3 = \frac{\delta\sqrt{\mu_f^2 + 2\gamma \mu_f} + \gamma}{(\mu_f + \gamma)^2} < 1$ is the contraction factor.
\end{theorem}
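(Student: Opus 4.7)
The theorem asserts a linear-contraction bound for CEASE that essentially matches [Fan et al., Thm 3.1], so the plan is to transport that argument to the matrix-variate setting, using Lemma \ref{lm:strong-convex} (strong convexity of $f$) and Lemma \ref{lm:nbla} (uniform Hessian closeness) as the key structural inputs. First I would write down the first-order optimality condition for each local GEL minimizer $\boldsymbol{W}_c^k = \argmin_{\boldsymbol{W}} \tilde{f}_c(\boldsymbol{W})$: there exists $\boldsymbol{\xi}_c^k \in \partial g(\boldsymbol{W}_c^k)$ with
\begin{equation*}
\nabla f_c(\boldsymbol{W}_c^k) - \nabla f_c(\boldsymbol{W}^k) + \nabla f(\boldsymbol{W}^k) + \gamma(\boldsymbol{W}_c^k - \boldsymbol{W}^k) + \boldsymbol{\xi}_c^k = 0,
\end{equation*}
and similarly the KKT condition at $\boldsymbol{W}^*$: $\nabla f(\boldsymbol{W}^*) + \boldsymbol{\xi}^* = 0$ for some $\boldsymbol{\xi}^* \in \partial g(\boldsymbol{W}^*)$.

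Averaging the $C$ local conditions over $c$ and using the aggregation step $\boldsymbol{W}^{k+1} = \bar{\boldsymbol{W}}_c^k$ turns the optimality system into a fixed-point-style relation for $\boldsymbol{W}^{k+1}$. Subtracting the KKT identity and taking the Frobenius inner product with $\boldsymbol{W}^{k+1} - \boldsymbol{W}^*$ lets me apply three tools in sequence: (i) monotonicity of the subdifferential of $g$ to discard the $\boldsymbol{\xi}_c^k - \boldsymbol{\xi}^*$ terms; (ii) $\mu_f$-strong convexity of $f$ from Lemma \ref{lm:strong-convex} via $\langle \nabla f(\boldsymbol{W}^{k+1}) - \nabla f(\boldsymbol{W}^*), \boldsymbol{W}^{k+1} - \boldsymbol{W}^*\rangle \geq \mu_f \norm{\boldsymbol{W}^{k+1} - \boldsymbol{W}^*}_F^2$; and (iii) the uniform bound $\norm{\nabla^2 f_c - \nabla^2 f}_F \leq \delta$ from Lemma \ref{lm:nbla}, which, integrated along the segment from $\boldsymbol{W}^k$ to $\boldsymbol{W}_c^k$, controls the residual $\nabla f_c(\boldsymbol{W}_c^k) - \nabla f_c(\boldsymbol{W}^k) - [\nabla f(\boldsymbol{W}_c^k) - \nabla f(\boldsymbol{W}^k)]$ by $\delta \norm{\boldsymbol{W}_c^k - \boldsymbol{W}^k}_F$. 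A chain of Cauchy--Schwarz and Young inequalities then isolates $\norm{\boldsymbol{W}^{k+1} - \boldsymbol{W}^*}_F$ on the left and collapses the remaining terms into $\norm{\boldsymbol{W}^k - \boldsymbol{W}^*}_F$ multiplied by $\tau_3$. Finally, the sufficient condition $\delta^2/(\mu_f + \gamma)^2 < \mu_f/(\mu_f + 2\gamma)$ is exactly what is needed to certify $\tau_3 < 1$, and I would check it by rearranging the claimed expression for $\tau_3$.

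The main obstacle is the cross term $\nabla f_c(\boldsymbol{W}_c^k) - \nabla f_c(\boldsymbol{W}^k)$, which arises from local-only curvature and is not itself contractive. The pivotal trick is to split it as $[\nabla f_c(\boldsymbol{W}_c^k) - \nabla f(\boldsymbol{W}_c^k)] - [\nabla f_c(\boldsymbol{W}^k) - \nabla f(\boldsymbol{W}^k)] + [\nabla f(\boldsymbol{W}_c^k) - \nabla f(\boldsymbol{W}^k)]$: the first two pieces combine via Lemma \ref{lm:nbla} into a $\delta\norm{\boldsymbol{W}_c^k - \boldsymbol{W}^k}_F$ penalty, while the last piece is absorbed using strong convexity of $f$. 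Bookkeeping these contributions sharply enough to recover the precise $\tau_3$ stated in the theorem, rather than a looser constant, is the delicate part; everything else is routine optimization algebra, analogous line-for-line to the vector-valued CEASE proof in \cite{fan2019communication}, with matrix inner products and Frobenius norms replacing their vector counterparts.
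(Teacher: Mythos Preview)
The paper does not actually prove this theorem: it is stated immediately after the sentence ``The contraction factor of the CEASE algorithm was given in \cite[Theorem 3.1]{fan2019communication}'' and is offered as a direct restatement of that result, with no accompanying argument. Your proposal to transport the Fan et al.\ proof to the matrix-variate setting---optimality conditions for the local GEL minimizers, averaging, the splitting of $\nabla f_c(\boldsymbol{W}_c^k) - \nabla f_c(\boldsymbol{W}^k)$ via Lemma~\ref{lm:nbla}, and the strong-convexity/monotonicity inequalities---is exactly the content of the cited reference and is therefore precisely what the paper is invoking; there is nothing to compare.
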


\par Theorem \ref{thn:cease-convergence} implies that the CEASE algorithm converges faster with $n_c$ increasing.
It is intuitive that the estimation of $\boldsymbol{W}$ in the node machine, i.e., $\boldsymbol{W}_c^k$,  is more accurate when $n_c$ is sufficiently large. Therefore, it takes fewer rounds of aggregation before the algorithm converges. But it is not the case for the AGD algorithm, the structural parameter $\kappa_f$ remains stable with $n_c$ increasing (Fig. \ref{fig:hc_eigen}, right). So we do not expect Algorithm \ref{alg: agd} converges faster with $n_c$ increasing.

\subsubsection{Computational Complexity and Communication Load}
We focus on the computational complexity and communication load of updating $\boldsymbol{W}$ until convergence of the three algorithms. Suppose that the AGD, ADMM and CEASE algorithms stop in $q_1$, $q_2$ and $q_3$ iterations, respectively. Computing the gradients on $C$ machines is $O(C(mnr + mr^2))$, and then the complexity of AGD is $O(C(mnr + mr^2))$. AGD needs to collect the gradient and then broadcast the updated $\boldsymbol{W}$, so the communication load is $2q_1mr$. One iteration of ADMM involves matrix multiplication, soft thresholding and computation of the inverse of matrices of size $r\times r$, and thus has the complexity $O(C(mnr + mr^2 + r^3))$. The total complexity is $O(q_2C(mnr + mr^2 + r^3))$. ADMM only collects and broadcasts a matrix of $m\times r$. So, the communication load of ADMM is $2q_2 mr$. CEASE computes the $\boldsymbol{W}_c$ by FISTA. Suppose FISTA stops in $t$ iterations, and then the complexity of CEASE is  $O(tq_3C(mnr + mr^2))$. CEASE broadcasts and collects both the gradients and $\boldsymbol{W}_c$, so the communication load is $4q_3mr$. Table  \ref{table:compleixy} summarizes the complexity and the communication load of updating $\boldsymbol{W}$ for all the tree algorithms. Both ADMM and CEASE introduce the auxiliary variable $\boldsymbol{W}_c$ on the node machines to reduce the communication load, and they have to pay the extra computational cost.
\begin{table}[t]
    \caption{Complexity and Communication Load of Updating $\boldsymbol{W}$}
    \label{table:compleixy}
    \centering
    \begin{tabular}{ccc}
        \toprule
        & Complexity & Communication load \\
        \midrule
        AGD & $O(q_1C(mnr + mr^2))$& $2q_1mr$\\
        ADMM & $O(q_2C(mnr + mr^2 + r^3))$& $2q_2mr$\\
        CEASE &$O(tq_3C(mnr + mr^2))$ & $4q_3mr$\\
        \bottomrule
    \end{tabular}
\end{table}

\begin{figure*}
    \centering
    \includegraphics[width=0.8\linewidth]{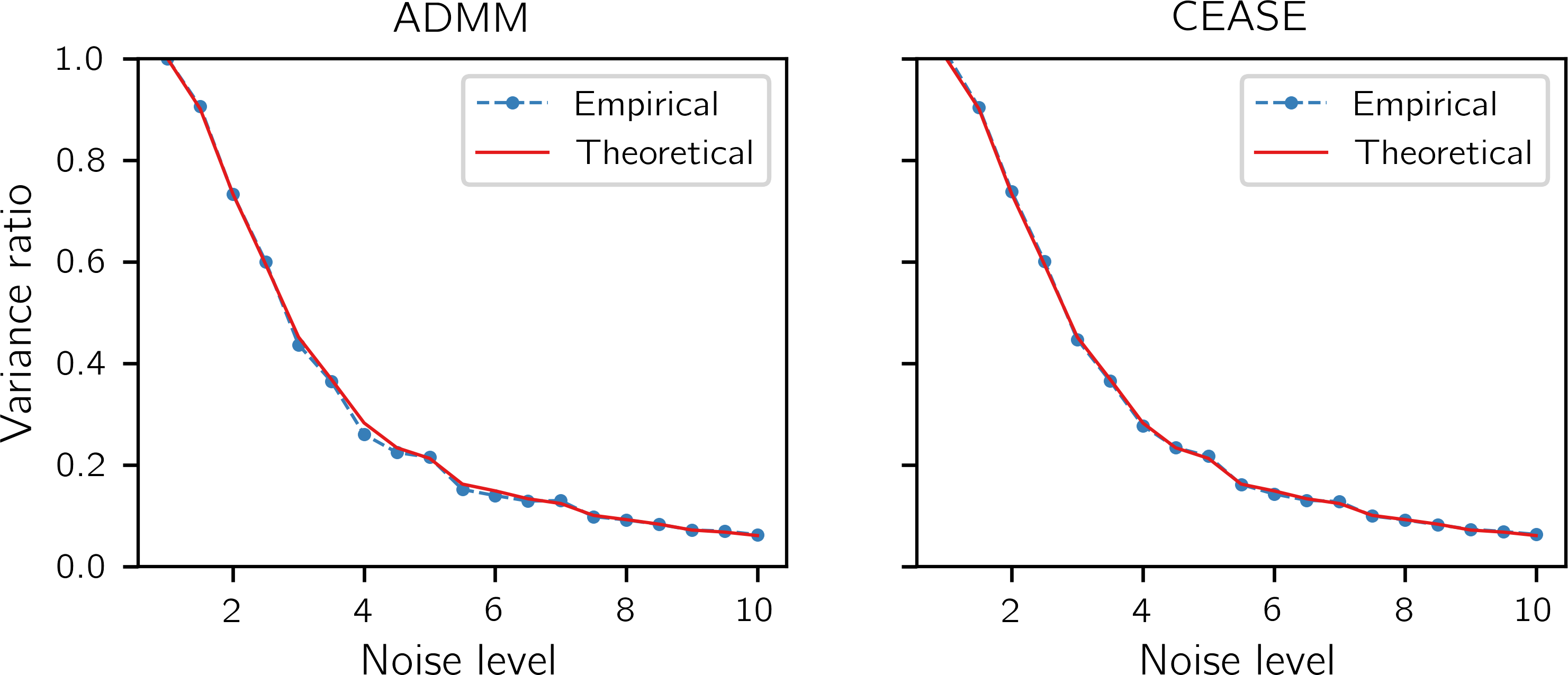}
    \caption{The variance ratio $\var(\tilde{\boldsymbol{W}}) / \var(\bar{\boldsymbol{W}})$ on a series of synthetic datasets $\{\boldsymbol{X}_c\}_{c=1}^5$, where the noise level $\sigma_c = 1, c\in [4]$, and $\sigma_5$ increases from 1 to 10.}
    \label{fig:var_reduction}
\end{figure*}

\begin{figure*}
    \centering
    \includegraphics[width=0.8\linewidth]{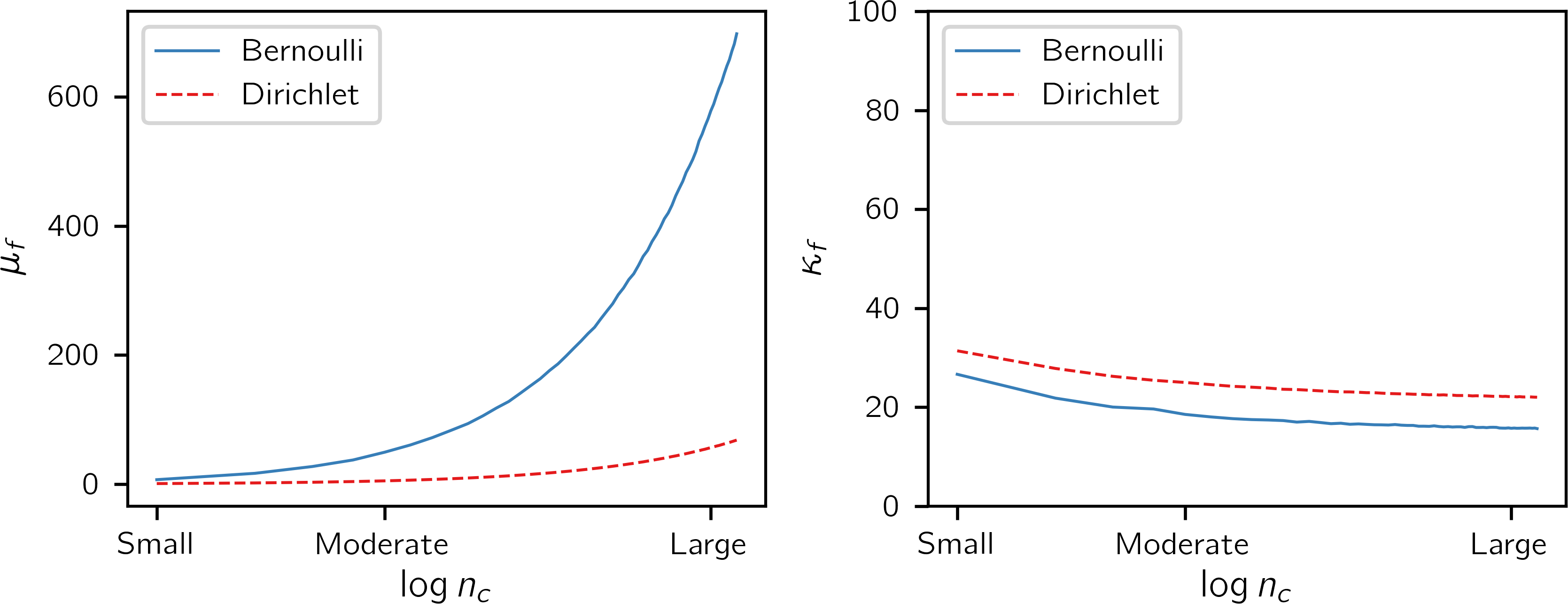}
    \caption{The $\log n_c$ versus the largest eigenvalue $\sigma_{\text{max}}$ and the condition number $\kappa$ of $\sum_{c=1}^C \boldsymbol{H}_c\boldsymbol{H}_c^T$, respectively. $\boldsymbol{H}_c$ are drawn from the Bernoulli and Dirichlet distributions respectively. $n_c$ ranges from 100 to 6000, and $\sigma_{\text{max}}$ and $\kappa$ are the average of 100 times for a given $n_c$.}
     \label{fig:hc_eigen}
\end{figure*}

\begin{figure*}[htbp]
    \centering
    \includegraphics[width=\textwidth]{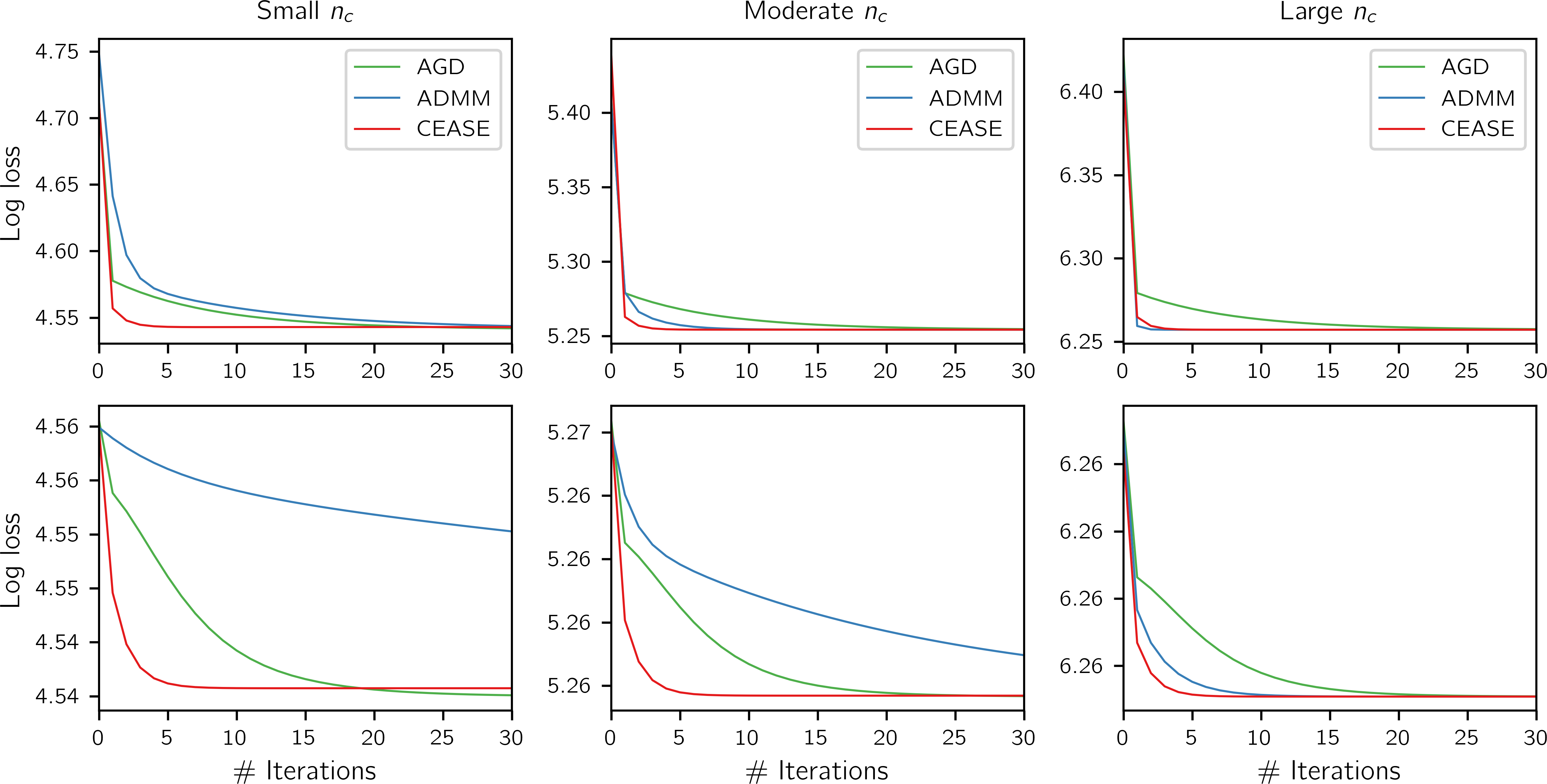}
    \caption{Log loss versus the number of iterations of the AGD, ADMM and CEASE algorithms respectively for updating $\boldsymbol{W}$. Top: results on synthetic data A; bottom: results on synthetic data B}
    \label{fig:obj_vals}
\end{figure*}

\section{Experimental Results}
We first evaluated the three algorithms on synthetic data to verify the theoretical analysis of updating $\boldsymbol{W}$.
Then we applied the proposed methods to real-world datasets for clustering and compared them with the distributed k-means and Scalable-NMF.
The synthetic experiments were performed on a desktop computer with a 2GHz Intel Xeon E5-2683 v3 CPU, a GTX 1080 GPU card, 16GB memory, and the real-world experiments were performed on a small spark cluster with six machines (one central processor and the rest are node machines).
Each machine is equipped with a Intel i7 CPU and 16GB memory.
We allow each physical machine runs two virtual machines at most. Therefore, there are at most 10 node machines.
The source code is available at https://github.com/zhanglabtools/dbmd.

\subsection{Synthetic Experiments}
We generate the basis matrix $\boldsymbol{W} \in R^{m\times r}$ inspired by \cite{Wu2016}
\begin{equation}
 w_{ik} =
 \begin{cases}
 a,  &  1 + (k-1)(l-coh)\leq i \leq l + (k-1)(l-coh) \quad \\
 &  k \in [r]\\
 0,  & \text{otherwise}
 \end{cases}
\end{equation}
where $a$ is a constant, $l$ denotes the number of non-zero entries in each column of $\boldsymbol{W}$,  and $coh$ denotes the length of coherence between basis $\boldsymbol{w}_{i-1, \cdot}$ and $\boldsymbol{w}_{i\cdot}$.
We generated the coefficient matrices $\{\boldsymbol{H}_c\}_{c=1}^C$ in two different ways:
1) draw entries of  $\boldsymbol{H}_c$ from the Bernoulli distribution $(h_{kj})_c \sim \text{B}(1, p)$. We set the last entries of all zero columns of $\boldsymbol{H}_c$ to 1, and then we normalize $\boldsymbol{H}_c$ such that the sum of column equals one;
2) draw columns of $\boldsymbol{H}_c$ from the Dirichlet distribution with a parameter $\boldsymbol{\alpha}$. Then the observed data matrices $\{\boldsymbol{X}_c\}^{C}_{c=1}$ are generated by  $\boldsymbol{X}_c = \boldsymbol{W}\boldsymbol{H}_c + \boldsymbol{E}_c$, where $(e_{ij})_c \sim N(0, \sigma_c^2)$. We suppose that $\boldsymbol{H}_c$ is known in this subsection.

\par To verify the effectiveness of the weighted averages, we generated a series of datasets $\{\boldsymbol{X}_c\}_{c=1}^5$ with $a=1.5$, $l=20$, $n_c=100$, $r=10$ and $coh = 2$. $\{\boldsymbol{H}_c\}_{c=1}^5$ were drawn from the Bernoulli distribution with $p=0.1$.
We set the noise level $\sigma_c = 1, c\in [4]$ and $\sigma_5$ ranges from 1 to 10. Consequently, $\boldsymbol{X}_c \in R^{182 \times 100}$.
We applied the ADMM with $\rho = 50$ and CEASE to the $\{\boldsymbol{X}_c\}_{c=1}^5$ with known $\boldsymbol{H}_c$.
The theoretical variance ratio is given in Theorem \ref{thn:variance reduction}.
We also computed the empirical variance ratio by repeating the experiments for 100 times.
The result confirms the correctness of our theoretical analysis in Theorem \ref{thn:variance reduction}.
The empirical variance ratio fits the theoretical line well for both the ADMM and CEASE algorithms (Fig. \ref{fig:var_reduction}).
 With $\sigma_c$ increasing, the estimated variance of $\boldsymbol{W}$ by the weighted average is smaller than that of the simple average (variance ratio approaches 0).
 Therefore, the plug-in weighted average can significantly reduce the variance of the estimated $\boldsymbol{W}$ when the heterogeneous noise exists.

\par We then investigated the convergence behaviors of the proposed methods with small, moderate, and large $n_c$ on node machines.
To facilitate the comparison, we generated the first synthetic data $A$ $\{\boldsymbol{X}_c\}_{c=1}^5$ with $a=1.5$, $l=20$, $r=20$ , $coh = 2$ $\sigma_c=1$ and $n_c =$ 100, 500, 5000, respectively.
$\boldsymbol{H}_c$ were drawn from the Bernoulli distribution with $p = 1 /20$.
Synthetic data $A$ contains 3 datasets with different $n_c$.
We generated another synthetic data $B$ with the same parameters, but $\boldsymbol{H}_c$ were drawn from the Dirichlet distribution with $\boldsymbol{\alpha} = \boldsymbol{1}$.
Given $n_c$, $\sigma_{\text{max}}(\sum_{c=1}^C \boldsymbol{H}_c\boldsymbol{H}_c^T)$ of $\boldsymbol{H}_c$ drawn from the Bernoulli distribution is greater than that of $\boldsymbol{H}_c$ drawn from the Dirichlet distribution; $\sigma_{\text{max}}(\sum_{c=1}^C \boldsymbol{H}_c\boldsymbol{H}_c^T)$ grows linearly with $n_c$ increasing (Fig. \ref{fig:hc_eigen}, left).
But the condition number $\kappa(\sum_{c=1}^C \boldsymbol{H}_c\boldsymbol{H}_c^T)$ doesn't change a lot with $n_c$ increasing (Fig. \ref{fig:hc_eigen}, right).

\par We plot the curves of the number of iterations versus the objective function values (Fig. \ref{fig:obj_vals}).
The convergence behaviors confirm our analysis:
1) ADMM and CEASE converge faster when $n_c$ gets larger, because $\sigma_{\text{max}}(\sum_{c=1}^C \boldsymbol{H}_c\boldsymbol{H}_c^T)$ gets larger when $n_c$ increases.
2) ADMM converges faster when $\boldsymbol{H}_c$ are drawn from the Bernoulli distribution (Fig. \ref{fig:obj_vals}, top row), because $\sigma_{\text{max}}(\sum_{c=1}^C \boldsymbol{H}_c\boldsymbol{H}_c^T)$ is larger than that drawn from the Dirichlet distribution. It is the same for CEASE.
3) The convergence of AGD takes around the same number of steps for small, moderate, and large size of $n_c$.
Unlike ADMM and CEASE, the convergence speed of AGD doesn't change with $n_c$ increasing.
There are also some other interesting observations:
1) The gradient-enhanced loss of CEASE accelerates its convergence. CEASE converges faster when $n_c$ is small and moderate.
2) ADMM is very slow when $n_c$ is small. But when $n_c$ is sufficiently large, ADMM may take fewer steps than AGD.
3) Because we can compute the Lipschitz constant $L$ of $\nabla f(\boldsymbol{W})$ directly, and $1/L$ is the largest step size.
Therefore, AGD is quite fast on this problem, and it converges within 30 steps. The experimental results show that CEASE reduces the number iterations regardless of $n_c$.
But ADMM reduces the number iterations when $n_c$ is sufficiently large.

\subsection{Real-World Experiments}
We further applied the proposed algorithms to real-world large-scale datasets for clustering.
\par \textbf{Datasets}. We downloaded three datasets, including CoverType, KDD99 and MINIST from the UCI Machine Learning Repository (https://archive.ics.uci.edu/ml/datasets.php). CoverType is the forest cover type data.
KDD-99 is the network connection data. Different types of network connections were given in the data.
KDD-99 contains 23 classes.
We removed the types the number of occurrences is below 100, and eleven classes are remaining.
MNIST is the handwritten digits data.
Each image is of size $28 \times 28$ and thus can be represented by a $784$-dimensional vector.
We used a subset of MNIST.
Table \ref{table:stats} summarizes the statistics of the used datasets.

\par \textbf{Experiment settings}. For all experiments, we set $\rho=300$, $\gamma=0.001$, $\boldsymbol{\alpha} = 1$.
We set the $L_1$-norm regularizer parameter $\lambda=2000, 4000, 500$ for CoverType, KDD-99, MNIST, respectively.
We always set $r$ equals the number of classes for convenience.
To facilitate the comparison of time costs of ADMM and CEASE, we stop the procedure of updating of $\boldsymbol{W}$ at the $(k+1)$-th iteration, if $||\boldsymbol{W}^{k+1}-\boldsymbol{W}^{k}||\leq \norm{\boldsymbol{W}^0}_F \times 10^{-2}$. AGD will stop immediately, because the step size $1 / L_f$ is typically very small.
Therefore, we ensured that the AGD algorithm iterates at least 30 rounds. The column of $H_c$ indicates the membership of the corresponding instance.
We assigned an instance $(\boldsymbol{x}_{\cdot j})_c$ to the class corresponding to the largest entry of the $(\boldsymbol{h}_{\cdot j})_c$.
We then evaluated the performance of the clustering by accuracy. The true classes and the predicted ones were matched by the Hungarian algorithm \cite{Kuhn1955}.
\begin{figure*}
    \centering
    \includegraphics[width=0.90\linewidth]{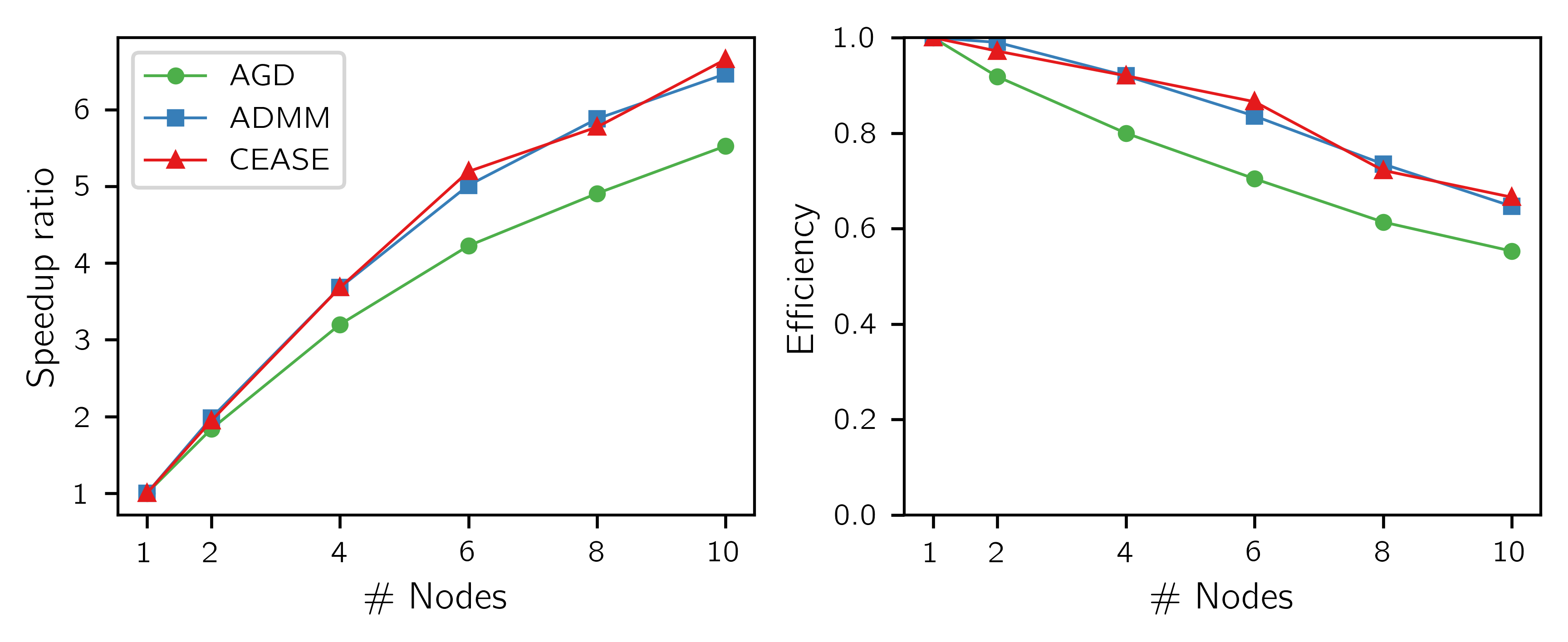}
    \caption{Scalability of the proposed methods on CoverType dataset.
        The speedup ratio of using $n$  nodes is defined by the ratio of the running time of single machine versus that of $n$ node machines.
        The efficiency is the reciprocal of the speedup ratio.}
    \label{fig:running_time.png}
\end{figure*}

\begin{table}[t]
    \caption{Summary of the Three Datasets}
    \centering
    \begin{tabular}{cccc}
        \toprule
        Dataset& \# Instances & \# Features & \# Classes \\
        \midrule
        CoverType &  \texttildelow 540,000 & 54 & 7\\
        KDD-99 & \texttildelow 4,900,000   & 41 & 11 \\
        MNIST  &  \texttildelow 400,000  & 784 & 10 \\
        \bottomrule
    \end{tabular}
\label{table:stats}
\end{table}

\par We first evaluated the scalability of our algorithms and applied them to the CoverType dataset with the number of node machines ranging from 1 to 10.
The proposed algorithms scale up well with the number of node machines increasing (Fig. \ref{fig:running_time.png}, left).
We can find that the efficiency decay of ADMM and CEASE is smaller than that of AGD (Fig. \ref{fig:running_time.png}, right).
Because AGD needs more rounds of communications for updating $\boldsymbol{W}$.

\par To compare the performance of clustering, we applied our algorithms to the three datasets and compared them with two distributed clustering algorithms based on matrix decomposition, including the Spark implementations of scalable k-means++ \cite{Meng2016} and Scalable-NMF \cite{Gittens2016} (Table \ref{table:performance}).
Generally speaking, our methods achieve competitive or superior performances compared to scalable k-means++ and scalable-NMF. Scalable-NMF has a poor performance on the MNIST data, while the accuracy of the proposed methods are acceptable.
The separable assumption of Scalable-NMF may be violated on real-world data. Scalable-NMF is very faster because it only requires one round iteration.
Our methods involve alternatively updating $\boldsymbol{W}$ and $\boldsymbol{H}_c$, which is computationally expensive.
But the time cost is still acceptable.
Compared with CEASE, AGD is still faster.
Because the local network latency is low, and AGD has a quadratic convergence rate.
CEASE reduces the communication load by paying more computational cost on the node machines. Among the three proposed algorithms, ADMM is the fastest one because it has closed-form solutions at each step of updating $\boldsymbol{W}$ and the $n_c$ is sufficiently large.

\par To verify the robustness of the proposed methods to the noise, we created a series of noisy MNIST data.
Specifically, we added Gaussian noises of $\sigma_1 = 0.1$, $\sigma_2=1.0$ to 20\% and 60\% of the instances of MNIST data. We then added Gaussian noises of the standard deviation $\sigma_3$ varying from $0$ to $9.5$ in step of 0.5 to the remaining 20\% instances.
In consequence, we generated the semi-synthetic MNIST datasets under 20 different noise settings.
We then applied the scalable k-means++ and the proposed methods to them.
Scalable-NMF is omitted for its poor performance.
Scalable k-means++ suffers from the increasing noise, and its performance drops down sharply and is very unstable (Fig. \ref{fig:noisy_mnist.png}).
On the contrary, all of the proposed methods show a mild performance decline while the noisy level increases.
It implies that the Bayesian priors reduce the risk of overfitting to the highly noisy data.
Moreover, the performance of ADMM and CEASE are slightly better than that of AGD when the noisy level is sufficiently large ($\sigma_3 >6.5$).
 It reminds us that considering the heterogeneity of the noise also contributes to the robustness of the model.
\begin{table*}[!t]
    \centering
    \begin{threeparttable}
        \caption{Clustering Performances of Different Methods on the Three Datasets}
        \begin{tabular}{lcccccc}
            \toprule
            &  \multicolumn{2}{c}{CoverType} & \multicolumn{2}{c}{KDD-99} & \multicolumn{2}{c}{MNIST} \\
            \cmidrule(lr){2-3} \cmidrule(lr){4-5}  \cmidrule(lr){6-7}
            & Accuracy & Time (s)  & Accuracy&  Time (s) & Accuracy& Time (s) \\
            \midrule
            Scalable-NMF         &   33.79 & 2.14 & 79.35 & 13.90  & 20.55& 25.77\\
            Scalable k-means++  &   29.71(3.44)  & 12.55  & 72.62 (6.73) & 42.54 & 47.96(1.75) & 78.56\\
            DBMD-AGD &               42.20(0.32) &139.43 & 89.44(0.62) &1279.29 & 43.70(2.73) &456.50\\
            DBMD-ADMM &              42.58(0.36) &156.34& 89.76(0.04) &1045.10 & 43.32(3.17) &375.87\\
            DBMD-CEASE &             42.32(0.32) &157.23& 89.45(0.45) &1517.39 & 43.38(3.18) &799.92\\
            \bottomrule
        \end{tabular}
        \begin{tablenotes}
            \small
            \item The means and the standard deviations of 5 runs are shown here. The standard deviations of the Scalable-NMF are not reported, because it is a deterministic algorithm. Time is in seconds (s).
        \end{tablenotes}
        \label{table:performance}
    \end{threeparttable}
\end{table*}

\begin{figure}
	\centering
 	\includegraphics[width=.95\linewidth]{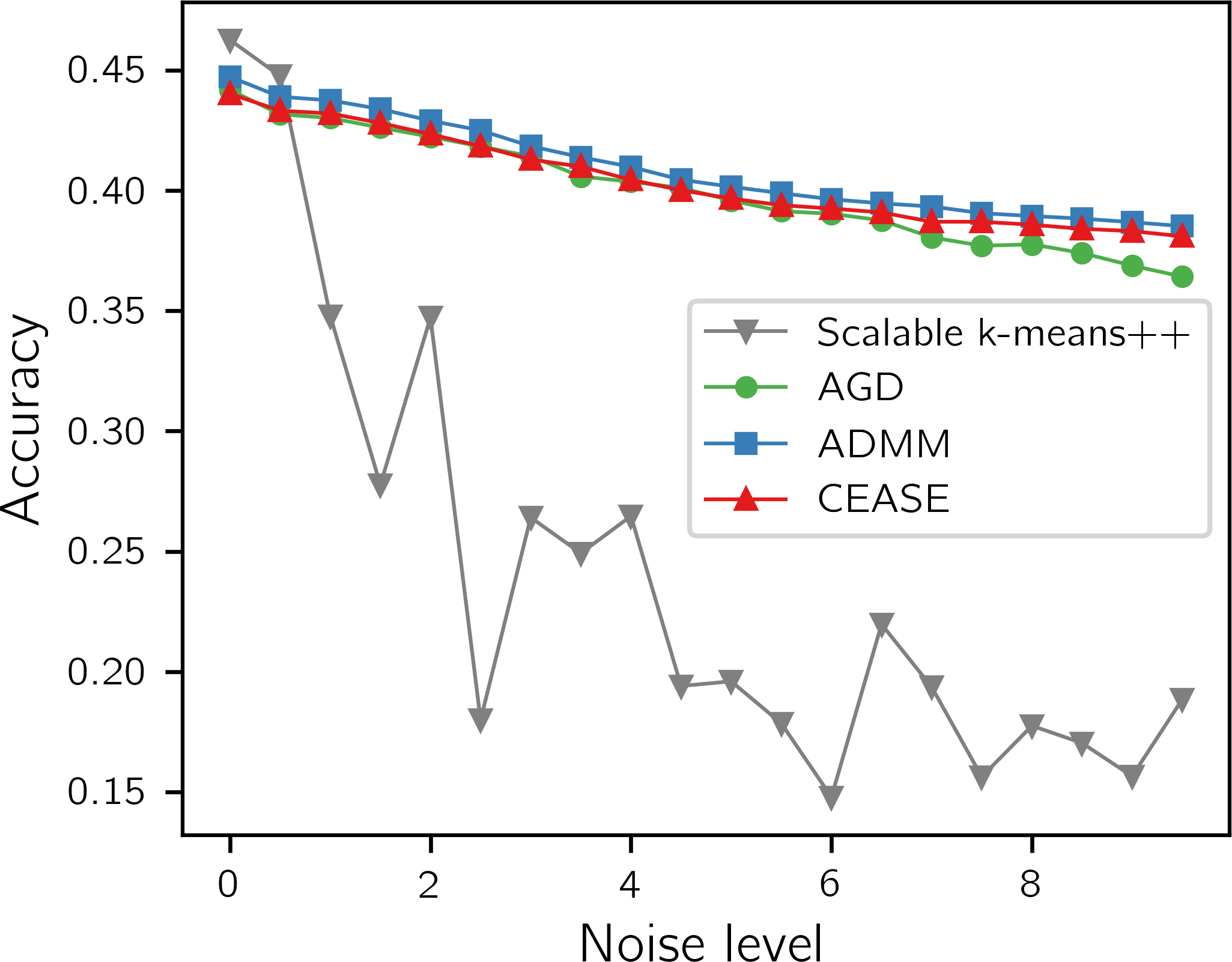}
	\caption{The clustering performance on the noisy MNIST data. The noise level of 20\% instances increasing from 0 to 9.5. The accuracy is the average of 10 runs.}
	\label{fig:noisy_mnist.png}
\end{figure}

\section{Discussion and Conclusion}
\par We proposed a distributed Bayesian matrix decomposition model for big data mining and clustering. Three distributed strategies (i.e., AGD, ADMM and CEASE) were adopted to solve it. The convergence rates of AGD and CEASE depend on different structural parameters (i.e., $\mu_f$ and $\kappa_f$) and thus have different behaviors. In short, CEASE converges faster with the number of instances on each node machine increasing, but the convergence rate of AGD doesn't change much. Empirically, ADMM also converges faster with the number of instances growing. To tackle the heterogeneous noise in the data, we propose an optimal plug-in weighted average scheme that  significantly reduces the variance of the estimation. The proposed algorithms scale up well. The real-world experiments demonstrate that the proposed algorithms achieve superior or competitive performance. Both the Bayesian prior and the weighted average strategies reduce the influence of the highly noisy data.

\par There are several questions worth investigating in future studies. First, the concept of the weighted average can be generalized to other algorithms. Second, we assume that the transposition of the data matrix is tall-and-skinny, which is limited. It is commonplace for modern applications that the data matrix is fat and tall, i.e., the numbers of rows and columns are both vast. Finally, we observed that ADMM converges faster when $n_c$ is larger. The convergence rate of this algorithm needs to be further investigated.

% if have a single appendix:
%\appendix[Proof of the Zonklar Equations]
% or
%\appendix  % for no appendix heading
% do not use \section anymore after \appendix, only \section*
% is possibly needed

% use appendices with more than one appendix
% then use \section to start each appendix
% you must declare a \section before using any
% \subsection or using \label (\appendices by itself
% starts a section numbered zero.)
%

\appendices

\section*{Acknowledgment}
This work has been supported by the National Natural Science Foundation of China [11661141019, 61621003, 61762089];
National Ten Thousand Talent Program for Young Top-notch Talents; National Key Research and Development Program of China [2017YFC0908405]; CAS Frontier Science Research Key Project for Top Young Scientist [QYZDB-SSW-SYS008].

%\appendix
%\section{Proof}

% Can use something like this to put references on a page
% by themselves when using endfloat and the captionsoff option.
\ifCLASSOPTIONcaptionsoff
  \newpage
\fi

\bibliographystyle{IEEEtran}
\bibliography{IEEEabrv,reference}

% Generated by IEEEtran.bst, version: 1.13 (2008/09/30)
\begin{thebibliography}{10}
\providecommand{\url}[1]{#1}
\csname url@samestyle\endcsname
\providecommand{\newblock}{\relax}
\providecommand{\bibinfo}[2]{#2}
\providecommand{\BIBentrySTDinterwordspacing}{\spaceskip=0pt\relax}
\providecommand{\BIBentryALTinterwordstretchfactor}{4}
\providecommand{\BIBentryALTinterwordspacing}{\spaceskip=\fontdimen2\font plus
\BIBentryALTinterwordstretchfactor\fontdimen3\font minus
  \fontdimen4\font\relax}
\providecommand{\BIBforeignlanguage}[2]{{%
\expandafter\ifx\csname l@#1\endcsname\relax
\typeout{** WARNING: IEEEtran.bst: No hyphenation pattern has been}%
\typeout{** loaded for the language `#1'. Using the pattern for}%
\typeout{** the default language instead.}%
\else
\language=\csname l@#1\endcsname
\fi
#2}}
\providecommand{\BIBdecl}{\relax}
\BIBdecl

\bibitem{Pearson1901}
K.~Pearson, ``{ LIII. no lines and planes of closest fit to systems of points
  in space },'' \emph{London, Edinburgh, Dublin Philos. Mag. J. Sci.}, vol.~2,
  no.~11, pp. 559--572, nov 1901.

\bibitem{Bishop2006}
C.~Bishop, \emph{{Pattern recognition and machine learning}}.\hskip 1em plus
  0.5em minus 0.4em\relax Springer, 2006.

\bibitem{Shen2014}
Y.~Shen, Z.~Wen, and Y.~Zhang, ``{Augmented Lagrangian alternating direction
  method for matrix separation based on low-rank factorization},'' \emph{Optim.
  Methods Softw.}, vol.~29, no.~2, pp. 239--263, mar 2014.

\bibitem{Min2019}
W.~Min, J.~Liu, and S.~Zhang, ``Group-sparse svd models via $ l\_1 $-and $ l\_0
  $-norm penalties and their applications in biological data,'' \emph{IEEE
  Trans. Knowl. Data Eng.}, 2019.

\bibitem{Paatero1994}
P.~Paatero and U.~Tapper, ``{Positive matrix factorization: A non-negative
  factor model with optimal utilization of error estimates of data values},''
  \emph{Environmetrics}, vol.~5, no.~2, pp. 111--126, jun 1994.

\bibitem{Lee1999}
D.~D. Lee and H.~S. Seung, ``{Learning the parts of objects by non-negative
  matrix factorization},'' \emph{Nature}, vol. 401, no. 6755, pp. 788--791, oct
  1999.

\bibitem{Hoyer2002}
P.~O. Hoyer, ``{Non-negative sparse coding},'' in \emph{Neural Networks Signal
  Process. - Proc. IEEE Work.}, vol. 2002-January, 2002, pp. 557--565.

\bibitem{Kim2007}
H.~Kim and H.~Park, ``{Sparse non-negative matrix factorizations via
  alternating non-negativity-constrained least squares for microarray data
  analysis},'' \emph{Bioinformatics}, vol.~23, no.~12, pp. 1495--1502, jun
  2007.

\bibitem{Cai2008}
D.~Cai, X.~He, X.~Wu, and J.~Han, ``{Non-negative matrix factorization on
  manifold},'' in \emph{Proc. IEEE Int. Conf. Data Mining}, 2008, pp. 63--72.

\bibitem{Cai2011}
D.~Cai, X.~He, J.~Han, and T.~S. Huang, ``{Graph regularized nonnegative matrix
  factorization for data representation},'' \emph{IEEE Trans. Pattern Anal.
  Mach. Intell.}, vol.~33, no.~8, pp. 1548--1560, 2011.

\bibitem{Zhang2011}
S.~Zhang, Q.~Li, J.~Liu, and X.~J. Zhou, ``A novel computational framework for
  simultaneous integration of multiple types of genomic data to identify
  microrna-gene regulatory modules,'' \emph{Bioinformatics}, vol.~27, no.~13,
  pp. i401--i409, 2011.

\bibitem{Tipping1999}
M.~E. Tipping and C.~M. Bishop, ``{Probabilistic principal component
  analysis},'' \emph{J. R. Stat. Soc. Ser. B (Statistical Methodol.)}, vol.~61,
  no.~3, pp. 611--622, aug 1999.

\bibitem{bishop1999bayesian}
C.~M. Bishop, ``{Bayesian PCA},'' in \emph{Adv. Neural Inf. Process. Syst.},
  1999, pp. 382--388.

\bibitem{Collins2001}
M.~Collins, S.~Dasgupta, and R.~E. Schapire, ``{A generalization of principal
  component analysis to the exponential family},'' in \emph{Adv. Neural Inf.
  Process. Syst.}, 2001, pp. 617--624.

\bibitem{mohamed2009bayesian}
S.~Mohamed, Z.~Ghahramani, and K.~A. Heller, ``{Bayesian exponential family
  PCA},'' in \emph{Adv. Neural Inf. Process. Syst.}, 2009, pp. 1089--1096.

\bibitem{Li2013}
J.~Li and D.~Tao, ``{Simple exponential family PCA},'' \emph{IEEE Trans. Neural
  Networks Learn. Syst.}, vol.~24, no.~3, pp. 485--497, 2013.

\bibitem{Welling2006}
M.~Welling and K.~Kurihara, ``{Bayesian k-means as a "maximization-expectation"
  algorithm},'' in \emph{Proc. SIAM Int. Conf. Data Min.}, vol. 2006, 2006, pp.
  474--478.

\bibitem{Schmidt2009}
M.~N. Schmidt, O.~Winther, and L.~K. Hansen, ``{Bayesian non-negative matrix
  factorization},'' in \emph{Int. Conf. Indep. Compon. Anal. Signal Sep.}\hskip
  1em plus 0.5em minus 0.4em\relax Springer, 2009, pp. 540--547.

\bibitem{Cemgil2009}
A.~T. Cemgil, ``Bayesian inference for nonnegative matrix factorisation
  models,'' \emph{Comput. Intell. Neurosci.}, vol. 2009, pp. 1--17, 2009.

\bibitem{Salakhutdinov2008}
R.~Salakhutdinov and A.~Mnih, ``{Bayesian probabilistic matrix factorization
  using Markov chain Monte Carlo},'' in \emph{Proc. Int. Conf. Mach.
  Learn.}\hskip 1em plus 0.5em minus 0.4em\relax ACM, 2008, pp. 880--887.

\bibitem{Saddiki2015}
H.~Saddiki, J.~McAuliffe, and P.~Flaherty, ``{GLAD: a mixed-membership model
  for heterogeneous tumor subtype classification},'' \emph{Bioinformatics},
  vol.~31, no.~2, pp. 225--232, jan 2015.

\bibitem{Xu2013}
C.~Xu, D.~Tao, and C.~Xu, ``A survey on multi-view learning,'' \emph{arXiv
  preprint arXiv:1304.5634}, 2013.

\bibitem{Zhang2012}
S.~Zhang, C.-C. Liu, W.~Li, H.~Shen, P.~W. Laird, and X.~J. Zhou, ``{Discovery
  of multi-dimensional modules by integrative analysis of cancer genomic
  data},'' \emph{Nucleic Acids Res.}, vol.~40, no.~19, pp. 9379--9391, 2012.

\bibitem{Jing2012}
L.~Jing, C.~Zhang, and M.~K. Ng, ``{SNMFCA: supervised NMF-based image
  classification and annotation},'' \emph{IEEE Trans. Image Process.}, vol.~21,
  no.~11, pp. 4508--4521, 2012.

\bibitem{Liu2013}
J.~Liu, C.~Wang, J.~Gao, and J.~Han, ``{Multi-view clustering via joint
  nonnegative matrix factorization},'' in \emph{Proc. SIAM Int. Conf. Data
  Min.}\hskip 1em plus 0.5em minus 0.4em\relax SIAM, 2013, pp. 252--260.

\bibitem{Zhang2019b}
L.~Zhang and S.~Zhang, ``{A General Joint Matrix Factorization Framework for
  Data Integration and its Systematic Algorithmic Exploration},'' \emph{IEEE
  Trans. Fuzzy Syst.}, 2019.

\bibitem{Zhang2019a}
------, ``Learning common and specific patterns from data of multiple
  interrelated biological scenarios with matrix factorization,'' \emph{Nucleic
  Acids Res.}, vol.~47, no.~13, pp. 6606--6617, 2019.

\bibitem{Zhang2019}
C.~Zhang and S.~Zhang, ``Bayesian joint matrix decomposition for data
  integration with heterogeneous noise,'' \emph{IEEE Trans. Pattern Anal. Mach.
  Intell.}, pp. 1--1, 2019.

\bibitem{Jin2006}
R.~Jin, A.~Goswami, and G.~Agrawal, ``{Fast and exact out-of-core and
  distributed k-means clustering},'' \emph{Knowl. Inf. Syst.}, vol.~10, no.~1,
  pp. 17--40, 2006.

\bibitem{Bahmani2012}
B.~Bahmani, B.~Moseley, A.~Vattani, R.~Kumar, and S.~Vassilvitskii, ``{Scalable
  k-means++},'' \emph{Proc. VLDB Endow.}, vol.~5, no.~7, pp. 622--633, 2012.

\bibitem{Yu2014}
Z.-Q. Yu, X.-J. Shi, L.~Yan, and W.-J. Li, ``{Distributed stochastic ADMM for
  matrix factorization},'' in \emph{Proc. ACM Int. Conf. Conf. Inf. Knowl.
  Manag.}\hskip 1em plus 0.5em minus 0.4em\relax ACM, 2014, pp. 1259--1268.

\bibitem{Ahn2015}
S.~Ahn, A.~Korattikara, N.~Liu, S.~Rajan, and M.~Welling, ``{Large-scale
  distributed Bayesian matrix factorization using stochastic gradient MCMC},''
  in \emph{Proc. ACM SIGKDD Int. Conf. Knowl. Discov. Data Min.}\hskip 1em plus
  0.5em minus 0.4em\relax ACM, 2015, pp. 9--18.

\bibitem{Qin2019}
X.~Qin, P.~Blomstedt, E.~Lepp{\"{a}}aho, P.~Parviainen, S.~Kaski, J.~Davis,
  E.~Fromont, D.~Greene, and B.~B. {Bringmann Xiangju Qin}, ``{Distributed
  Bayesian matrix factorization with limited communication},'' \emph{Mach.
  Learn.}, vol. 108, pp. 1805--1830, 2019.

\bibitem{Liu2010}
C.~Liu, H.-c. Yang, J.~Fan, L.-W. He, and Y.-M. Wang, ``{Distributed
  nonnegative matrix factorization for web-scale dyadic data analysis on
  mapreduce},'' in \emph{Proc. Int. Conf. World Wide Web}.\hskip 1em plus 0.5em
  minus 0.4em\relax ACM, 2010, pp. 681--690.

\bibitem{Benson2014}
A.~R. Benson, J.~D. Lee, B.~Rajwa, and D.~F. Gleich, ``{Scalable methods for
  nonnegative matrix factorizations of near-separable tall-and-skinny
  matrices},'' in \emph{Adv. Neural Inf. Process. Syst.}, Z.~Ghahramani,
  M.~Welling, C.~Cortes, N.~D. Lawrence, and K.~Q. Weinberger, Eds.\hskip 1em
  plus 0.5em minus 0.4em\relax Curran Associates, Inc., 2014, pp. 945--953.

\bibitem{Zdunek2017}
R.~Zdunek and K.~Fonal, ``{Distributed nonnegative matrix factorization with
  HALS algorithm on MapReduce},'' in \emph{Lect. Notes Comput. Sci.}, vol.
  10393 LNCS.\hskip 1em plus 0.5em minus 0.4em\relax Springer Verlag, 2017, pp.
  211--222.

\bibitem{Arthur2007}
D.~Arthur and S.~Vassilvitskii, ``{k-means++: The advantages of careful
  seeding},'' in \emph{Proc. Annu. ACM-SIAM Symp. Discret. Algorithms}.\hskip
  1em plus 0.5em minus 0.4em\relax Society for Industrial and Applied
  Mathematics, 2007, pp. 1027--1035.

\bibitem{Donoho2004}
D.~Donoho and V.~Stodden, ``{When does non-negative matrix factorization give a
  correct decomposition into parts?}'' in \emph{Adv. Neural Inf. Process.
  Syst.}, S.~Thrun, L.~K. Saul, and B.~Sch{\"{o}}lkopf, Eds.\hskip 1em plus
  0.5em minus 0.4em\relax MIT Press, 2004, pp. 1141--1148.

\bibitem{Beck2009}
A.~Beck and M.~Teboulle, ``A fast iterative shrinkage-thresholding algorithm
  for linear inverse problems,'' \emph{SIAM J. Imag. Sciences}, vol.~2, no.~1,
  pp. 183--202, 2009.

\bibitem{Lan2018}
G.~Lan, S.~Lee, and Y.~Zhou, ``{Communication-efficient algorithms for
  decentralized and stochastic optimization},'' \emph{Math. Program.}, 2018.

\bibitem{Jordan2019}
M.~I. Jordan, J.~D. Lee, and Y.~Yang, ``{Communication-efficient distributed
  statistical inference},'' \emph{J. Am. Stat. Assoc.}, vol. 114, no. 526, pp.
  668--681, 2019.

\bibitem{fan2019communication}
J.~Fan, Y.~Guo, and K.~Wang, ``Communication-efficient accurate statistical
  estimation,'' \emph{arXiv preprint arXiv:1906.04870}, 2019.

\bibitem{Shamir2014}
O.~Shamir, N.~Srebro, and T.~Zhang, ``{Communication-efficient distributed
  optimization using an approximate newton-type method},'' in \emph{Int. Conf.
  Mach. Learn.}, 2014, pp. 1000--1008.

\bibitem{boyd2011distributed}
S.~Boyd, N.~Parikh, E.~Chu, B.~Peleato, J.~Eckstein \emph{et~al.},
  ``Distributed optimization and statistical learning via the alternating
  direction method of multipliers,'' \emph{Foundation News and
  Trends{\textregistered} in Machine learning}, vol.~3, no.~1, pp. 1--122,
  2011.

\bibitem{Tao2016}
S.~Tao, D.~Boley, and S.~Zhang, ``{Local linear convergence of ISTA and FISTA
  on the LASSO problem},'' \emph{SIAM J. Optim.}, vol.~26, no.~1, pp. 313--336,
  2016.

\bibitem{Wu2016}
S.~Wu, A.~Joseph, A.~S. Hammonds, S.~E. Celniker, B.~Yu, and E.~Frise,
  ``Stability-driven nonnegative matrix factorization to interpret spatial gene
  expression and build local gene networks,'' \emph{Proc. Natl. Acad. Sci.},
  vol. 113, no.~16, pp. 4290--4295, 2016.

\bibitem{Kuhn1955}
H.~W. Kuhn, ``The hungarian method for the assignment problem,'' \emph{Nav.
  Res. Logist. Q.}, vol.~2, no. 1--2, pp. 83--97, 1955.

\bibitem{Meng2016}
X.~Meng, J.~Bradley, B.~Yavuz, E.~Sparks, S.~Venkataraman, D.~Liu, J.~Freeman,
  D.~B. Tsai, M.~Amde, S.~Owen, and Others, ``{Mllib: machine learning in
  Apache Spark},'' \emph{J. Mach. Learn. Res.}, vol.~17, no.~1, pp. 1235--1241,
  2016.

\bibitem{Gittens2016}
A.~Gittens, A.~Devarakonda, E.~Racah, M.~Ringenburg, L.~Gerhardt, J.~Kottalam,
  J.~Liu, K.~Maschhoff, S.~Canon, J.~Chhugani, and Others, ``{Matrix
  factorizations at scale: a comparison of scientific data analytics in Spark
  and C++ MPI using three case studies},'' in \emph{IEEE Int. Conf. Big
  Data}.\hskip 1em plus 0.5em minus 0.4em\relax IEEE, 2016, pp. 204--213.

\end{thebibliography}

\end{document}